\renewcommand{\@algocf@capt@plain}{above}%
\newcommand\blfootnote[1]{%
  \begingroup
  \renewcommand\thefootnote{}%
  \footnotetext{#1}%
  \endgroup
}
\newtheorem{theorem}{Theorem}
\newtheorem{corollary}{Corollary}
\newtheorem{assumption}{Assumption}
\newtheorem{remark}{Remark}
\newtheorem{lemma}{Lemma}
\newtheorem*{theorem*}{Theorem}
\newtheorem*{corollary*}{Corollary}
\def\la{\langle}
\def\ra{\rangle}
\def\v{{\mathbf{v}}}
\def\g{{\mathbf{g}}}
\def\u{{\mathbf{u}}}
\def\x{{\mathbf{x}}}
\def\w{{\mathbf{w}}}
\def\m{{\mathbf{m}}}
\def\r{{\mathbf{r}}}
\def\H{{\mathbf{H}}}
\def\A{{\mathbf{A}}}
\def\C{{\mathbf{C}}}
\def\M{{\mathbf{M}}}
\def\Q{{\mathbf{Q}}}
\def\B{{\mathbf{B}}}
\def\H{{\mathbf{H}}}
\def\I{{\mathbf{I}}}
\def\grad{{\nabla}}
\def\L{{\mathcal{L}}}
\def\ddelta{\mathbf{\delta}}
\def\risk{\mathcal{R}}
\def\bias{\mathtt{bias}}
\def\variance{\mathtt{variance}}
\def\neta{\tilde{\eta}}
\def\bm{\widetilde{\m}}
\def\vm{\overline{\m}}
\def\br{\widetilde{\r}}
\def\diag{\text{diag}}
\def\O{\mathcal{O}}
\def\LLambda{{\mathbf{\Lambda}}}
\def\SSigma{\mathbf{\Sigma}}
\def\N{\mathcal{N}}
\def\tr{\text{Tr}}
\def\eqref#1{equation~\ref{#1}}
\def\Eqref#1{Equation~\ref{#1}}
\def\1{\mathbf{1}}
\def\eps{{\epsilon}}
\def\ra{{\textnormal{a}}}
\def\vm{{\bm{m}}}
\DeclareMathAlphabet{\mathsfit}{\encodingdefault}{\sfdefault}{m}{sl}
\SetMathAlphabet{\mathsfit}{bold}{\encodingdefault}{\sfdefault}{bx}{n}
\newcommand{\E}{\mathbb{E}}
\newcommand{\R}{\mathbb{R}}
\newcommand{\Cov}{\mathrm{Cov}}
\DeclareMathOperator{\sign}{sign}
\def\la{\langle}
\def\ra{\rangle}
\def\v{{\mathbf{v}}}
\def\g{{\mathbf{g}}}
\def\u{{\mathbf{u}}}
\def\x{{\mathbf{x}}}
\def\w{{\mathbf{w}}}
\def\m{{\mathbf{m}}}
\def\H{{\mathbf{H}}}
\def\A{{\mathbf{A}}}
\def\M{{\mathbf{M}}}
\def\Q{{\mathbf{Q}}}
\def\B{{\mathbf{B}}}
\def\H{{\mathbf{H}}}
\def\L{{\mathcal{L}}}
\def\ddelta{\mathbf{\delta}}
\def\risk{\mathcal{R}}
\def\bias{\mathtt{bias}}
\def\variance{\mathtt{variance}}
\def\neta{\tilde{\eta}}
\def\bm{\widetilde{\m}}
\def\vm{\overline{\m}}
\def\LLambda{{\mathbf{\Lambda}}}
\def\SSigma{\mathbf{\Sigma}}
\def\N{\mathcal{N}}
\title{Seesaw: Accelerating Training by Balancing Learning Rate and Batch Size Scheduling}
\author[*,1,2]{Alexandru Meterez}
\author[*,1,2]{Depen Morwani}
\author[3]{Jingfeng Wu}
\author[1]{Costin-Andrei Oncescu}
\author[1,2]{\\ Cengiz Pehlevan}
\author[1,2]{Sham Kakade}
\affil[1]{Harvard University}
\affil[2]{Kempner Institute at Harvard University}
\affil[3]{University of California, Berkeley}
\begin{document}
\setlength{\parindent}{0pt}

\maketitle
\begin{abstract}
Increasing the batch size during training --- a “batch ramp'' --- is a
promising strategy to accelerate large language model
pretraining. While for
SGD, doubling the batch size can be equivalent to halving the learning
rate, the optimal strategy for adaptive optimizers like Adam is
less clear. As a result, any batch-ramp scheduling, if used at all, is
typically tuned heuristically. 

This work develops a principled framework for batch-size scheduling
and introduces \textit{Seesaw}: whenever a standard scheduler would
halve the learning rate, Seesaw instead multiplies it by $1/\sqrt{2}$
and doubles the batch size, preserving loss dynamics while reducing
serial steps. Theoretically, we provide, to our knowledge, the first
finite-sample proof of equivalence between learning-rate decay and
batch-size ramp-up for SGD on noisy linear regression, and we extend
this equivalence to normalized SGD, a tractable proxy for Adam, under a
variance-dominated regime observed in practice. Empirically, on
150M/300M/600M-parameter models trained at Chinchilla scale using a constant
(critical) batch size, \textit{Seesaw} matches cosine decay at equal
FLOPs while reducing wall-clock time by $\approx 36\%$, \emph{approaching
the theoretical limit} implied by our analysis. 
\end{abstract}

\section{Introduction}
\blfootnote{\hspace{-6mm}${}^\star$: Equal contribution. \\ Correspondence to:  \texttt{ameterez@g.harvard.edu, dmorwani@g.harvard.edu} \\ Code is available at: \url{https://github.com/alexandrumeterez/seesaw}}
\label{sec:introduction}
In recent years, large language models (LLMs) have demonstrated remarkable progress across diverse tasks, including outperforming humans in competitive benchmarks and international competitions~\citep{huang2025gemini, petrov2025proof, el2025competitive}. A central driver of this progress has been the steady increase in pre-training compute~\citep{kaplan2020scaling, hoffmann2022training}. However, hardware improvements have not kept pace with the rapid escalation of training requirements, resulting in wall-clock times extending to several months for state-of-the-art models~\citep{erdil2024data}. 

A widely studied strategy to reduce wall clock time is increasing the batch size~\citep{you2017large, goyal2017accurate}. Empirical studies show that larger batches can proportionally reduce the number of optimization steps required for convergence~\citep{zhang2024does, mccandlish2018empirical, shallue2019measuring}, while maintaining comparable per-step runtime through parallelization. However, beyond a maximum batch size termed as critical batch size (CBS), further scaling reduces sample efficiency and limits gains in training speed. 

While most prior work focuses on training with a fixed batch size, recent large-scale LLM training runs employ batch size schedules that gradually increase batch size over the course of training~\citep{dubey2024llama,touvron2023llama, adler2024nemotron, olmo20242, swissai2025apertus}. This practice has been observed to further reduce training times without compromising model performance. However, to the best of our knowledge, the ``batch ramp'' schedules are not theoretically grounded and instead tuned heuristically. The lack of theoretical justification leaves open whether these heuristics are close to optimal, motivating the central question of our study: \textit{what is the optimal batch size schedule for minimizing serial runtime while not sacrificing performance?}

\subsection{Theoretical Contributions}
\label{sec:theoretical_contributions}
We theoretically prove, to the best our knowledge, the first non-asymptotic equivalence result between learning rate decay and batch size ramp up in SGD in linear regression with additive noise. Further, we extend our equivalence result to normalized SGD (considered a proxy for Adam), leading to the batch size scheduling algorithm Seesaw (Algorithm \ref{alg:seesaw}). We introduce an informal version of our main theorem here, as well as the corollary leading up to Seesaw, and we formalize the statements in Section~\ref{sec:theoretical_analysis}.

\begin{theorem*}[Informal version of Theorem~\ref{thm:sgd_bounded_risks}]
    Consider a base SGD process that runs for 
$k$ phases, using a fixed learning rate throughout, while the batch size doubles after each phase.
Now consider an alternative process where the batch size is fixed but the learning rate halves after each phase, and where each phase processes the same number of data points as in the base process.
Then, the excess risk of the base process is within a constant factor of that of the alternative process.
\end{theorem*}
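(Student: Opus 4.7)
The plan is to run a bias--variance analysis for SGD on noisy linear regression, where the quadratic loss makes both the mean and covariance of the iterate obey closed-form linear recursions. Write $\risk(\w_t)-\risk(\w^*) = \tfrac12 \tr\bigl(\H \cdot \E[(\w_t-\w^*)(\w_t-\w^*)^\top]\bigr)$, and split the second moment into a bias piece $\v_t\v_t^\top$, with $\v_t := \E[\w_t-\w^*]$, and a covariance piece $\SSigma_t := \Cov(\w_t-\w^*)$. The bias recursion $\v_{t+1} = (\I-\eta\H)\v_t$ is independent of the batch size $B$; the covariance obeys $\SSigma_{t+1} = (\I-\eta\H)\SSigma_t(\I-\eta\H) + (\eta^2/B)\,\gN(\SSigma_t,\v_t)$, with a gradient-noise map $\gN$ essentially dominated by $\sigma^2\H$ (plus a small self-induced term in the stability regime $\eta\|\H\|\lesssim 1$).

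Next I would compare the two processes phase by phase. In phase $i$, the base process takes $T_i$ steps with $(\eta,2^{i-1}B)$, while the alternative takes $2^{i-1}T_i$ steps with $(\eta/2^{i-1},B)$; both process the same data count. Diagonalizing $\H$, the base's bias contraction on an eigenvalue $\lambda$ is $(1-\eta\lambda)^{T_i}$ and the alternative's is $(1-\eta\lambda/2^{i-1})^{2^{i-1}T_i}$. The elementary inequality $(1-x/k)^k \in [1-x,\,e^{-x}]$ for $x\in(0,1)$ pinches these two factors within a universal constant (and in fact the alternative contracts at least as much). For the variance, the per-step noise injection is $\eta^2\sigma^2\H/(2^{i-1}B)$ for the base versus $\eta^2\sigma^2\H/(4^{i-1}B)$ for the alt; accumulated under the respective contractions via the geometric-sum identity $\sum_s(1-c)^{2s}\approx 1/(2c)$, both yield an equilibrium contribution of order $\eta\sigma^2/(2^iB)$ per eigendirection, which is the alignment driving the theorem. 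The self-induced noise term is controlled by the same mechanism provided $\eta\|\H\|\lesssim 1$.

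The final step is to chain these per-phase comparisons. Writing the final excess risk of either process as a bias term (product of all per-phase contractions applied to the initial excess risk) plus a sum of variance terms (each per-phase noise injection contracted by all subsequent phases), the eigenvalue-by-eigenvalue sandwich from the previous paragraph promotes each factor and each summand into a constant multiple of its counterpart in the other process. I would execute this by organizing the inequalities as two-sided operator sandwiches on $(\I-\eta\H)$ and on $\gN$, so that the per-phase constants appear in each term only once rather than compounding. The hard part will be exactly this compounding issue: na\"ively a constant-factor slack per phase would produce an exponential-in-$k$ blowup. The way around it is to exploit the fact that the bias contractions in successive phases are themselves shrinking (so slack on a heavily contracted factor is free), and to arrange the variance comparison so the constant sits outside the whole geometric sum, mirroring the structure of standard fixed-step linear-regression SGD risk bounds.
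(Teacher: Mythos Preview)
Your high-level plan---diagonalize, compare per-phase contractions and noise injections eigenvalue by eigenvalue, then chain---matches the paper's. But two concrete devices the paper relies on are absent from your proposal, and without them the ``hard part'' you flag does not close.

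First, your contraction comparison is not two-sided as stated. For $x=\eta\lambda\in(0,1)$ you have $(1-x/k)^k\ge 1-x$, so the alternative contracts \emph{less} per equivalent step, not more; and the ratio $\bigl[(1-x/k)^k/(1-x)\bigr]^{T_i}$ is unbounded in $T_i$, so ``within a universal constant'' fails without further argument. The paper handles the reverse direction by comparing process~1 at learning rate $\eta$ to process~2 at $1.01\eta$: one shows $\bigl(1-\tfrac{1.01\eta}{\alpha_2^k}\lambda\bigr)^{\beta_1^k}\le\bigl(1-\tfrac{\eta}{\alpha_1^k}\lambda\bigr)^{\beta_2^k}\le\bigl(1-\tfrac{\eta}{\alpha_2^k}\lambda\bigr)^{\beta_1^k}$ directly (their Lemma on contractions), and the final statement sandwiches $\risk(\eta_k,B_k)$ between $\risk(1.01\eta_k',B_k')$ and $\risk(\eta_k',B_k')$. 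This perturbation trick is what replaces your hoped-for universal constant on the bias.

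Second, and more importantly for the compounding issue: the paper does not split into mean $\v_t$ and covariance $\SSigma_t$, but tracks the full second-moment diagonal $\m_t$ satisfying $\m_{t+1}=\A_k\m_t+\tfrac{\eta^2\sigma^2}{B}\lambda$ with $\A_k$ containing the rank-one self-induced piece $\tfrac{\eta^2}{B}\lambda\lambda^\top$. The key move is to invoke the bounded-risk Assumption ($\langle\lambda,\m_t\rangle\le 2c\sigma^2$) at \emph{every} step, which converts $\lambda\lambda^\top\m_t$ into $\le 2c\sigma^2\lambda$, i.e.\ folds the self-induced noise into the additive forcing with a single factor $(1+2c)$. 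After this, the contraction is purely diagonal, $(\I-\eta\LLambda/\alpha^k)^2$, the per-phase comparison becomes the scalar inequality above, and the $(1+2c)$ sits outside the entire geometric sum---no compounding. Your proposal says the self-induced term ``is controlled by the same mechanism provided $\eta\|\H\|\lesssim 1$,'' but that stability condition alone does not prevent the cross term from reintroducing a multiplicative slack per step; the bounded-risk assumption is doing the real work and you should name it.
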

\begin{corollary*}[Informal version of Corollary~\ref{cor:normalized_sgd_bounded_risks}]
     Consider a base normalized SGD process (Equation \ref{eq:nsgd}) that runs for $k$ phases, where the batch size doubles after each phase while the learning rate decays by a factor of $\sqrt{2}$. Consider an alternative normalized SGD process where the batch size is fixed but the learning rate halves after each phase, and where each phase processes the same number of data points as in the base process. Then, the excess risk of the base process is within a constant factor of that of the alternative process. 
\end{corollary*}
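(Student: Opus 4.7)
The plan is to reduce the corollary to Theorem~\ref{thm:sgd_bounded_risks} by exploiting the fact that, in the variance-dominated regime, the norm of the mini-batch stochastic gradient $\hat{\g}_B$ concentrates near a value of the form $c/\sqrt{B}$ that depends only on the noise scale and not on the (much smaller) signal. First I would establish this concentration as a lemma: decomposing $\hat{\g}_B$ into its mean (the population gradient) plus a zero-mean batch-averaged noise term that scales like $1/\sqrt{B}$, a standard concentration argument yields $\|\hat{\g}_B\| = (1+o(1))\,c/\sqrt{B}$ with high probability, where $c$ depends only on the noise covariance. Consequently the normalized update $-\eta\,\hat{\g}_B/\|\hat{\g}_B\|$ equals $-(\eta\sqrt{B}/c)\,\hat{\g}_B$ up to a multiplicative $1+o(1)$ factor --- i.e., a vanilla SGD step with an \emph{effective} learning rate $\eta_{\mathrm{eff}} := \eta\sqrt{B}/c$.

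With this reduction in hand, the two schedules map cleanly onto the two processes of Theorem~\ref{thm:sgd_bounded_risks}. In phase $k$ of Seesaw, $\eta$ has been multiplied by $(1/\sqrt{2})^{k-1}$ and $B$ by $2^{k-1}$, so $\eta_{\mathrm{eff}}$ is \emph{constant across phases} while the effective batch doubles per phase --- exactly the ``base process''. In the alternative normalized-SGD schedule, $\eta$ halves per phase and $B$ is fixed, so $\eta_{\mathrm{eff}}$ halves and the effective batch stays fixed --- exactly the ``alternative process''. Moreover, the ``same number of data points per phase'' constraint is inherited from the normalized to the effective SGD trajectories, so Theorem~\ref{thm:sgd_bounded_risks} applies verbatim and yields the constant-factor excess-risk equivalence between the two effective SGD runs, which the norm-concentration lemma then transfers back to the original normalized SGD runs.

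The main technical obstacle will be controlling the $1+o(1)$ reduction error uniformly across all phases and batch sizes, and propagating it through the multi-phase recursion without compounding into more than a constant factor in excess risk. Two points deserve care. First, the variance-dominated assumption must be stated quantitatively strongly enough that the relative fluctuation $\|\hat{\g}_B\|/\E\|\hat{\g}_B\|-1$ is small in \emph{every} phase, including the one with the largest batch (where the signal-to-noise ratio is least favorable). Second, since $1/\|\hat{\g}_B\|$ is nonlinear, I would Taylor-expand around the deterministic surrogate $c/\sqrt{B}$, identify the leading term with the effective SGD dynamics, and bound the remainder in expectation using the same contractivity of the linear-regression risk already employed in the proof of Theorem~\ref{thm:sgd_bounded_risks}; this should let the perturbation be absorbed into the theorem's constants rather than into its exponents, which is what the statement requires.
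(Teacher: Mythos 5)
Your core reduction is exactly the paper's: under the variance-dominated assumption, normalized SGD becomes SGD with an effective learning rate $\neta \eqsim \eta\sqrt{B}/(\sigma\sqrt{\tr(\H)})$, so a $(\eta/\sqrt{2},\,2B)$ phase schedule keeps $\neta$ on the same trajectory (up to constants) as the $(\eta/2,\,B)$ schedule with the effective batch ramp, and Theorem~\ref{thm:sgd_bounded_risks} (with $\alpha_1\beta_1=\alpha_2\beta_2$ for the effective schedules, equivalently $\alpha\sqrt{\beta}$ held fixed for the original ones) finishes the job. That mapping is correct and is all the paper does.

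Where you diverge is in treating the normalizer as the \emph{realized} minibatch gradient norm $\|\hat{\g}_B\|$, which forces you into a concentration lemma and a Taylor expansion of $1/\|\hat{\g}_B\|$. The paper's NSGD (Equation~\ref{eq:nsgd}) normalizes by $\sqrt{\E\|\g_t\|^2}$, a deterministic population quantity, so under Assumption~\ref{assump:additive_noise_dominates} the reduction to rescaled SGD is exact up to the constants in $\eqsim$ and no probabilistic argument is needed. Your extra layer is therefore unnecessary for the stated corollary; if you did insist on it, it would contain a real gap. First, $\|\hat{\g}_B\|=(1+o(1))c/\sqrt{B}$ is not ``standard concentration'' without a high-dimension or effective-rank condition: the norm of a batch-averaged noise vector in fixed dimension has $\Theta(1)$ relative fluctuations. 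Second, your claim that the resulting multiplicative perturbation of the effective learning rate can be ``absorbed into the theorem's constants'' is precisely what fails in general: a uniform multiplicative change to the learning-rate schedule shifts the bias contraction exponents, which is why the lower bound in Theorem~\ref{thm:sgd_bounded_risks} must be stated against $\risk(1.01\cdot\eta_k',B_k')$ rather than $\risk(\eta_k',B_k')$. The paper avoids this issue entirely by keeping the normalizer deterministic; your route would have to confront it head-on.
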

\subsection{Empirical Contributions}
Based on the theoretical analysis, we introduce \textit{Seesaw} (Algorithm \ref{alg:seesaw}), a learning rate and batch size scheduler that reduces the serial runtime of LLM pre-training runs by approximately $36\%$ via increasing the batch size during training at specific points. We provide empirical results in Figure~\ref{fig:main_figure} and show that at (or below) the critical batch size, our method achieves a significant serial runtime acceleration across several model and data scales, while maintaining the same performance as training with cosine decay. We also empirically show that Seesaw also works even when using AdamW with tuned weight decay in Figure~\ref{fig:weight_decay_losses} of Appendix~\ref{app:weight_decay}, making Seesaw a practical solution for reducing the wall-clock time of LLM pretraining.

\begin{figure}[!htp]
    \centering
    \includegraphics[width=1.0\linewidth]{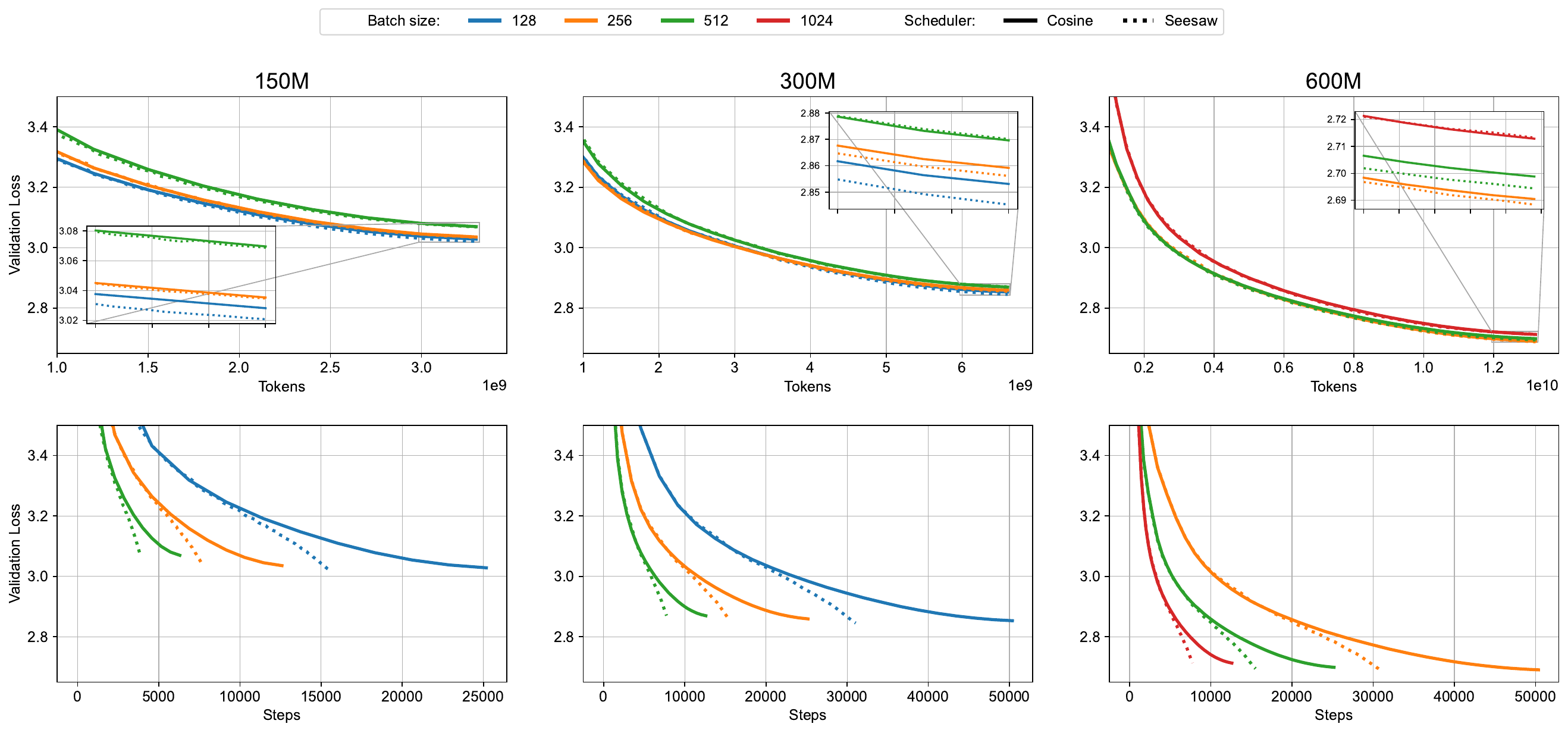}
    \caption{Seesaw comparison with cosine decay in 150M (left), 300M (middle) and 600M (right) models trained at Chinchilla scale. Seesaw matches the loss dynamics of cosine annealing in FLOPs (top row), but achieves a significant speed up in terms of serial runtime (bottom row). Runs are swept over learning rates and plotted at the best learning rate for cosine annealing in terms of validation loss, at each batch size. The validation losses at the end of training are provided in Table~\ref{tab:final_losses}. Note that both the token and step axes are linear. For more experimental details, see Section~\ref{sec:exp_details}.}
    \label{fig:main_figure}
\end{figure}
\section{Related Work}
\label{sec:related_work}
\paragraph{Role of batch size in scaling.} Understanding batch size ramp up schemes during training has been a topic of interest in recent years due to its crucial role in decreasing wall clock runtime. Various methods of increasing the batch size have been used in common LLMs such as LLaMA~\citep{dubey2024llama,touvron2023llama}, Nemotron~\citep{adler2024nemotron}, OLMo~\citep{olmo20242, groeneveld2024olmo}, Apertus~\citep{swissai2025apertus}. The reason behind ramping up the batch size is to take advantage of the parallel computation of samples and thus reducing the total number of sequential steps. However, since increasing the batch size reduces the total number of gradient steps taken by the model during training, there is a maximal batch size which can be achieved without becoming data inefficient, called the critical batch size (CBS)~\citep{erdil2024data, jain2018parallelizing, zhang2024does, shallue2019measuring}. Recent work also looks at the effect of batch size on SGD optimization in LLMs~\citep{sreckovic2025your, marek2025small}, following previously established theoretical results in noisy quadratic models~\citep{zhang2019algorithmic}.

\paragraph{SGD for linear regression.} Recently,~\citet{zhang2024does} have analyzed the CBS using weight averaging in linear regression and established scaling laws as a function of data and model size. The bias-variance analysis used by~\citet{zhang2024does} has a longstanding history in the literature~\citep{jain2017markov} and has been used to study batch ramp-up schemes in SGD~\citep{jain2018parallelizing}. These rates have been recently made tight by~\citep{zou2021benign, wu2022last,wu2022power} for general spectra of the data covariance.
Recently, ~\citep{meterez2025simplified} have used a simplified mathematical framework for rederiving the same bounds by rotating the dynamics in the eigenbasis of the data. A similar diagonalizing idea has also been previously used in literature by~\citet{bordelon2021learning, wu2023finite,wu2023many}.

\paragraph{Stochastic Differential Equations (SDEs). } Another point of view for studying the interaction between batch size and learning rate in optimization is through SDEs~\citep{li2021validity, xie2020diffusion, compagnoni2024adaptive, jastrzkebski2017three}.~\citet{malladi2022sdes} study how to scale the learning rate as a function of the batch size in adaptive algorithms, extending previous work that introduced the square root scaling rule~\citep{granziol2022learning, you2019large}. 

\paragraph{Empirical work.} Scaling laws for the CBS and the optimal batch size have also been recently observed  by~\citep{bergsma2025power}. In line with our conclusions regarding SGD, the linear scaling rule for SGD has been observed by~\citep{smith2017don}, showing that in SGD, linearly increasing the batch size is equivalent to decreasing the learning rate.~\citet{mccandlish2018empirical} propose a metric based on the Hessian and the noise that correlates with the CBS over training. While their proposed metric is based on having access to the Hessian, which is prohibitive for current large-scale runs, they find that the noise scale increases during a training run, which aligns with our theoretical predictions.  Lastly, perhaps the most similar to our work is~\citet{merrill2025critical}, who propose a batch size warmup scheme based on starting from a checkpoint with various multiples $k$ of the current batch size, and pick the largest $k^\star$ where the loss is $\eps$-close to the original loss. Based on this methodology, they propose the scaling rule $B_{t+1} = 2B_t$ and $\eta_{t+1} = \sqrt{2} \eta$. In contrast, we propose a simple drop-in replacement for existing cosine schedulers, motivated rigorously by (normalized) SGD on quadratics. Moreover, we argue that the scheduler proposed by~\citep{merrill2025critical} will lead to instabilities and divergence after a fixed number of steps, based on our theoretical analysis in Lemma~\ref{lem:divergence_conditions}. 

\section{Seesaw: Algorithmic Details}
We begin by providing an intuitive derivation of Seesaw, and the practical implementation of our algorithm. To build intuition, consider $2$ different SGD processes. In one process we take $2$ steps at learning rate $\eta/2$ and batch size $B$, and in the other we take $1$ step at learning rate $\eta$ and batch size $2B$. Consider a general smooth loss function $\L(\x)$ and let $\g_0 = \grad \L(\x_0)$. Then, through a simple Taylor expansion up to first order in $\eta$, we have the loss of the $(\eta, 2B)$ process and the loss of the $2$ half step process $(\eta/2, B)$ respectively:
\begin{align*}
    \L(\x_1) &= \L(\x_0) - \eta \g_0^\top (\g_0 + \xi') + \O(\eta^2) & \Cov(\xi') &= \tfrac{\sigma^2}{2B} \I_d \\
    \L(\x_2) &= \L(\x_0) - \frac{\eta}{2} \g_0^\top (2 \g_0 + \xi_0 + \xi_1) + \O(\eta^2) & \Cov(\xi_i) &= \tfrac{\sigma^2}{B} \I_d.
\end{align*}
Note that the $2$ processes are equivalent up to first order both in the deterministic part and in the noise terms up to $\O(\eta^2)$, an argument which has been previously shown by~\citet{malladi2022sdes}. We formalize this SGD intuition in Theorem~\ref{thm:sgd_bounded_risks} and extend it to normalized SGD as an analytical proxy to Adam in Corollary \ref{cor:normalized_sgd_bounded_risks}.

\subsection{Extension to Normalized SGD}
From the previous subsection, intuitively, for SGD, cutting the learning rate by a factor of $\alpha$ should be equivalent to increasing the batch size by a factor of $\alpha$.
To design a practical training algorithm based on the SGD analysis and arrive at Seesaw, we begin with the Adam update rule and simplify until we obtain normalized SGD (NSGD), which is a commonly used tractable analytical proxy for Adam~\citep{jelassi2022dissecting,zhao2024deconstructing,xie2024adam}. Suppose we are optimizing over parameters $\theta$ and denote the gradients at each time step $\g_t$. Then, for learning rate $\eta$ and ignoring the bias correction, the parameter update for Adam is given by:
\begin{align}
    \m_t &= \beta_1 \m_{t-1} + (1 - \beta_1) \g_t \\
    \v_t &= \beta_2 \v_{t-1} + (1 - \beta_2) \g_t^2 \\
    \theta_{t} &= \theta_{t} - \eta \frac{\m_t}{\sqrt{\v_t} + \eps}
\end{align}
where $\m_t$ is the momentum term, $\v_t$ is the second moment term, $\beta_1, \beta_2$ are their respective exponential decay rates, and $\eps$ ensures stability. For NSGD, we approximate the per-coordinate preconditioner of Adam will a single scalar preconditioner, set $\beta_1=\beta_2=0$ and replace the denominator with the true expected value of the squared gradient norms over the population:
\begin{align}
    \theta_t = \theta_t - \eta \frac{\g_t}{\sqrt{\E \|\g_t\|^2}} \label{eq:nsgd}
\end{align}

\begin{wrapfigure}[17]{r}{0.55\textwidth}
\begin{minipage}{0.55\textwidth}
\begin{algorithm}[H]
\SetAlgoNlRelativeSize{-1} %
\SetNlSty{textbf}{\{}{\}} %
\caption{Seesaw}
\label{alg:seesaw}
\textbf{Inputs:} $\eta_0$ (initial learning rate), $B_0$ (initial batch size), $\alpha > 1$ (step decay factor), $S$ (an array of steps at which input scheduler cuts $\eta$ by $\alpha$), $T$ (total training steps)

$\eta \gets \eta_0$, $B \gets B_0$

\For{$t \gets 1$ \KwTo $T$}{
  \If{$t \in S$}{
    $\eta \gets \eta / \sqrt{\alpha}$\;
    $B \gets B \cdot \alpha$\;
  }
}
\end{algorithm}
\end{minipage}
\end{wrapfigure}
\Eqref{eq:nsgd} describes the NSGD update rule, which is a crucial component of designing Seesaw. While the full analysis is deferred to Appendix~\ref{app:normalized_sgd_analysis}, the expected gradient norms can be decomposed as:
\begin{align}
    \E \|\g_t\|^2 &= \mathtt{mean} + \mathtt{variance}
\end{align}

where the variance scales down with the batch size.
To design Seesaw, we assume that the $\mathtt{variance}$ dominates the expected gradient squared norms (Assumption~\ref{assump:additive_noise_dominates}), and we motivate why this assumption is reasonable in Appendix~\ref{app:normalized_sgd_analysis}. This step reduces (up to constant factors) the NSGD update rule to SGD with a rescaled learning rate (Equation \ref{eq:nsgd_var_dom}), allowing us to extend risk equivalence to NSGD (Corollary \ref{cor:normalized_sgd_bounded_risks}) in Section~\ref{sec:theoretical_analysis}. For NSGD, informally, Corollary \ref{cor:normalized_sgd_bounded_risks} shows that any learning rate cut by a factor of $\alpha$ and batch size increase by a factor of $\beta$ are equivalent as long as $\alpha \sqrt{\beta}$ is held constant. We further empirically comapre Seesaw with other possible schedulers in Figure~\ref{fig:naive_schedules}.

\subsection{Achievable Speedups}
While our theory is established for step decay schedulers, in practice we approximate cosine decay with a step decay by considering a decay of $\alpha$, and passing the times (as measured in tokens) where the cosine would cut the learning rate by $\alpha$ as input to Seesaw. Then, at these points, we instead cut the learning rate by $\sqrt{\alpha}$ and increase the batch size by $\beta$, where the schedulers are equivalent in terms of loss as long as we keep the product $\alpha \sqrt{\beta}$ fixed. However, we cannot arbitrarily increase the batch size at time $t$ and expect the risk to match the underlying process. Lemma $\ref{lem:divergence_conditions}$ quantifies this and the main takeaway is stated below: 
\begin{remark}
    The most aggressive ramp up scheme we can use is given by $\alpha = \sqrt{\beta}$. (for a formal argument see Lemma~\ref{lem:divergence_conditions})
\end{remark}

\noindent In Section~\ref{sec:can_we_do_better} we empirically verify this constraint and show that $\alpha = \sqrt{\beta}$ is the most aggressive scheme we can choose without divergence. The corresponding algorithm is provided in Algorithm~\ref{alg:seesaw}.
At the most aggressive limit, we can compute the theoretical speedup we would hope to achieve where the standard scheduler is the cosine decay.

\begin{lemma}[Maximum Theoretical Speedup under Cosine Decay]
\label{lem:max_speedup}
Consider a baseline training process of $T$ total steps using a constant batch size and a cosine learning rate schedule $\eta(t) = \eta_0 \cos(\frac{\pi t}{2T})$. An equivalent process run with a batch ramping schedule like \textit{Seesaw}, in the continuous limit~
\footnote{In the continuous-time limit, we consider an aggressive (non-divergent) batch size ramp that maintains the relationship $\alpha = \sqrt{\beta}$. Consequently, the total number of sequential steps is given by the integral of the normalized learning rate schedule: $\int_{0}^{T} \frac{\eta(t)}{\eta_0} dt = \int_{0}^{T} \cos(\frac{\pi t}{2T}) dt = \frac{2T}{\pi}$.}, will have a total of $\frac{2T}{\pi}$ steps. This yields a maximum theoretical serial runtime reduction of $(1 - \frac{2}{\pi}) \approx 36.3\%$. 
\end{lemma}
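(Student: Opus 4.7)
The plan is to reduce the claim to a single elementary integral by passing from the step-decay setting of Corollary~\ref{cor:normalized_sgd_bounded_risks} to a continuous-time cosine schedule. First I would fix notation: let the baseline have constant batch $B_0$ and schedule $\eta(t) = \eta_0 \cos(\pi t/(2T))$ for $t \in [0,T]$, and denote the cumulative decay factor by $\alpha(t) := \eta_0/\eta(t)$. By Corollary~\ref{cor:normalized_sgd_bounded_risks}, a Seesaw process using learning-rate cut factor $\alpha_s$ and batch-increase factor $\beta_s$ is risk-equivalent to the baseline provided $\alpha_s\sqrt{\beta_s} = \alpha(t)$ at the matched time. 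Imposing the most aggressive (non-divergent) relation $\alpha_s = \sqrt{\beta_s}$ from Lemma~\ref{lem:divergence_conditions} then forces $\beta_s = \alpha(t)$, so Seesaw must run at batch size $B(t) = B_0\,\eta_0/\eta(t)$ at the point matched to baseline time $t$.

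Next I would parametrize both processes by total tokens processed $u \in [0, T B_0]$. In the continuous limit, Seesaw consumes $B(u/B_0)$ tokens per serial step, so the total number of serial Seesaw steps satisfies
\begin{equation*}
N \;=\; \int_0^{T B_0} \frac{du}{B(u/B_0)} \;=\; \int_0^T \frac{\eta(t)}{\eta_0}\,dt \;=\; \int_0^T \cos\!\Big(\tfrac{\pi t}{2T}\Big)\,dt \;=\; \tfrac{2T}{\pi},
\end{equation*}
after a linear change of variable $t = u/B_0$ and direct evaluation of the cosine integral. The serial runtime reduction is therefore $1 - N/T = 1 - 2/\pi \approx 0.363$, matching the claim.

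The calculation itself is elementary; the only conceptual step needing care is the continuous-time reduction. Corollary~\ref{cor:normalized_sgd_bounded_risks} is stated for piecewise-constant step-decay schedules with finitely many phases, so the main (and minor) obstacle I would address is arguing that refining the cosine into infinitely many arbitrarily short phases preserves the pairwise equivalence up to vanishing error, allowing the discrete sum of per-phase step counts to be replaced by the integral above. Once that limit is granted---as the footnote does implicitly---everything collapses to evaluating $\int_0^T \cos(\pi t/(2T))\,dt = 2T/\pi$, and the stated $(1 - 2/\pi)$ speedup follows immediately.
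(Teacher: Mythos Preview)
Your proposal is correct and follows the same route as the paper: the paper's entire argument is the footnote, which asserts that under the aggressive ramp $\alpha=\sqrt{\beta}$ the number of serial steps equals $\int_0^T \eta(t)/\eta_0\,dt = 2T/\pi$. Your derivation simply fleshes out why that integral is the right object---by identifying $B(t)=B_0\eta_0/\eta(t)$ via the equivalence line and then reparametrizing by tokens---so you are supplying the details the paper omits rather than taking a different approach.
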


Lemma~\ref{lem:max_speedup} provides an upper bound on the acceleration from \textit{Seesaw}. In Figure \ref{fig:main_figure}, we can indeed see that the total number of steps are reduced approximately by $36\%$ as predicted.

\section{Empirical Findings}
\label{sec:empirical_findings}
In this section, we present the experimental details and methodology for evaluating Seesaw. We denote by $D$ the dataset size, $N$ the number of parameters. 

\begin{table}[h!]
\centering
\begin{tabular}{c|c|c|c|c}
\multicolumn{1}{c}{} & B=128 & B=256 & B=512 & B=1024 \\ \hline
150M (cosine) & 3.0282 & 3.0353 & 3.0696 & 3.1214 \\
150M (Seesaw) & 3.0208 & 3.0346 & 3.0687 & 3.1318 \\ \midrule
300M (cosine) & 2.8531 & 2.8591 & 2.8696 & 2.9369 \\
300M (Seesaw) & 2.8452 & 2.8561 & 2.8700 & 2.9490 \\ \midrule
600M (cosine) & - & 2.6904 & 2.6988 & 2.7128 \\
600M (Seesaw) & - & 2.6883 & 2.6944 & 2.7132 \\
\end{tabular}
\caption{Final validation losses picked at the best learning rate (for the cosine annealing scheduler) for each batch size, for $\alpha=1.1$. Note that the dynamics match robustly across the $2$ schedulers when trained at CBS.}
\label{tab:final_losses}
\end{table}
\label{sec:exp_details}
\paragraph{Model and dataset. } We pretrain models of size 150M, 300M and 600M (non-embedding) parameters at Chinchilla scaling i.e. $D = 20 N$~\citep{hoffmann2022training}. We use the OLMo~\citep{groeneveld2024olmo} codebase to train all of our models. For each experiment, we do learning rate warmup for $10\%$ of the total amount of tokens, followed by learning rate decay following cosine scheduling or Seesaw. We report the architectural details of each model as a tuple (depth, $\#$ heads, width), and thus we have for 150M $(12, 16, 1024)$, 300M $(24, 16, 1024)$ and for 600M $(24, 22, 1408)$. Unless mentioned otherwise, each model is trained using AdamW, with weight decay $\lambda = 0.0$ (with the exception of the weight decay experiments in Appendix~\ref{app:weight_decay}, where we sweep over $\lambda \in \{ 0.000001, 0.00001, 0.0001, 0.001, 0.01, 0.1, 1.0 \}$), $\beta_1 = 0.9$, $\beta_2 = 0.95$, $\epsilon = 10^{-8}$. For each run we sweep over learning rates $\eta \in \{0.001, 0.003, 0.01, 0.03\}$ and initial batch sizes $B \in \{ 128, 256, 512, 1024 \}$, at sequence length $L=1024$. Similar to the OLMo training codebase, we enable z-loss during training, but provide ablations over it in Appendix~\ref{app:aux_losses} showing that it does not affect the model performance at our scales. All our models are pretrained on the C4 dataset~\citep{raffel2020exploring}, tokenized with the T5 tokenizer. 

\paragraph{Experimental design.} We compare Seesaw with cosine annealing by training models at the critical batch size (CBS) $B^\star$, approximated based on~\citep{zhang2024does}, namely $B^\star \approx 256k$ (150M), $B^\star \approx 512k$ (300M) and $B^\star \approx 1024k$ (600M) tokens.  The main results comparing Seesaw and cosine annealing at equal FLOPs are provided in Figure~\ref{fig:main_figure}. The precise final losses obtained by the 2 schedulers are provided in Table~\ref{tab:final_losses}.

\subsection{Can We Do Better?}
\label{sec:can_we_do_better}
Recall that based on Corollary~\ref{cor:normalized_sgd_bounded_risks} and Lemma~\ref{lem:divergence_conditions}, we have a family of equivalent schedules in NSGD, given by a fixed product $\alpha \sqrt{\beta}$, under the constraint that $\alpha \geq \sqrt{\beta}$. Ideally, we would like to make $\beta$ as large as possible, since this would lead to larger batch sizes, and thus assuming enough devices are available, the lowest serial runtime. Crucially, the constraint prevents us from using a too agressive batch size scheduler. In this section, we empirically verify our theoretical prediction by testing schedulers positioned at various points on the $(\alpha, \beta)$ axis. 

\begin{table}[h!]
\centering
\begin{tabular}{c|ccccc}
$\alpha$ & $2$ & $2^{3/4}$ & $2^{1/2}$ & $2^{1/4}$ & $1$ \\ \hline
$\beta$  & $1$ & $2^{1/2}$ & $2$ & $2^{3/2}$ & $2^2$ \\
\end{tabular}
\caption{$\alpha, \beta$ values used to test the extreme values of the equivalence.}
\label{tab:between_points}
\end{table}

Namely, we train 150M models at fixed batch size and Chinchilla scale, and we approximate cosine decay with a step decay scheduler that halves the learning rate at the token counts where the cosine schedule's learning rate would halve. This gives us the baseline $\alpha=2$ and $\beta=1$, with the product $\alpha \sqrt{\beta} = 2$. Based on the theoretical constraint and the equivalence line, the most aggressive scheduler we could use is $\alpha=\sqrt{2}$ and $\beta=2$. To validate our hypothesis, we compare with $\alpha=1$ and $\beta=4$, and points in between at geometric intervals. Table~\ref{tab:between_points} gives an overview of the experimental design, and Figure~\ref{fig:inbetween_points} shows that indeed the most aggressive schedules tend to underperform. 

\begin{figure}
    \centering
    \includegraphics[width=1.0\linewidth]{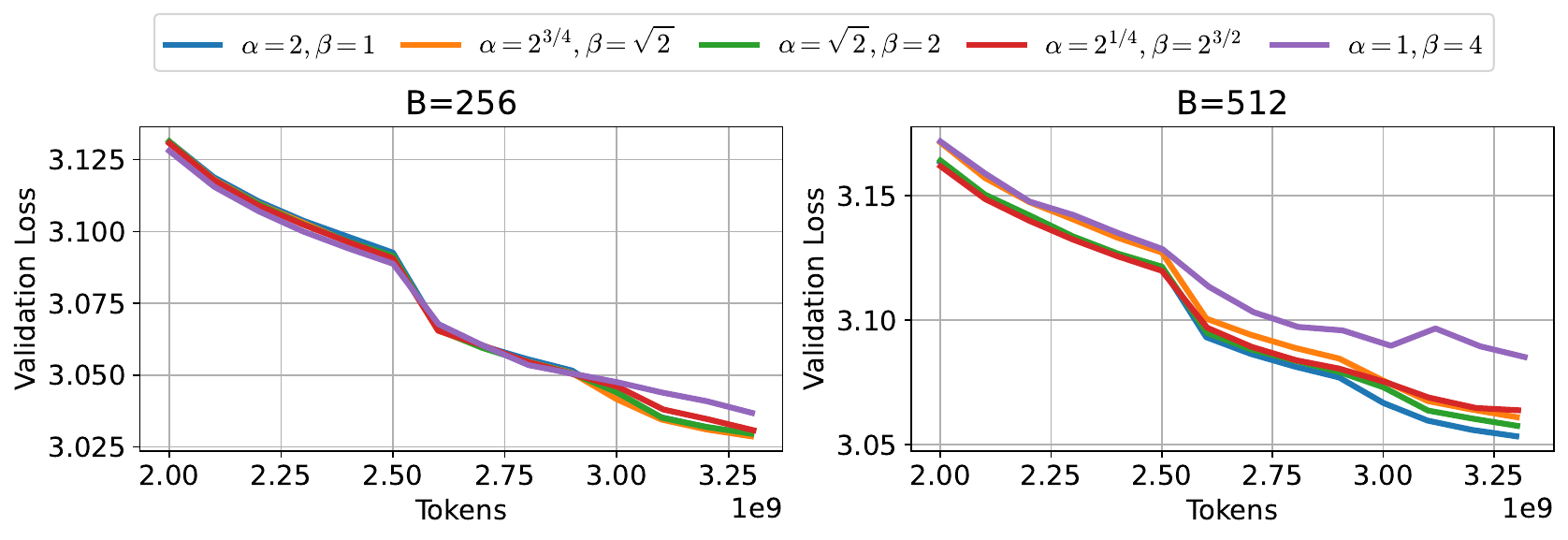}
    \caption{150M models trained at batch size 256 (left) and 512 (right) with $\alpha$ and $\beta$ values following the line of equivalence $\alpha \sqrt{\beta} = 2$ described in Table~\ref{tab:between_points}. Note that the target to match is the blue trace, and our theory (Lemma~\ref{lem:divergence_conditions}) predicts that the red and purple traces should not match the baseline (blue trace) due to instabilities.}
    \label{fig:inbetween_points}
\end{figure}

\subsection{When Does Assumption~\ref{assump:additive_noise_dominates} Fail?}
\label{sec:training_at_cbs}
Up to this point, a crucial assumption for the development of our theory and the design of Seesaw has been Assumption~\ref{assump:additive_noise_dominates}. Recall that Assumption~\ref{assump:additive_noise_dominates} states that the expected gradient norms -- namely, the denominator of the NSGD update step, is dominated by the additive noise. Intuitively, since the noise variance decreases with the batch size as $\mathcal{O}(1/B)$, one can see that past a certain batch the additive noise will become small, and thus Assumption~\ref{assump:additive_noise_dominates} will fail. In Figure~\ref{fig:worse_past_cbs}, we can see that at sufficiently large batch sizes, indeed Seesaw starts to perform worse as compared to the underlying cosine schedule.
\begin{figure}[!htp]
    \centering
    \includegraphics[width=.9\linewidth]{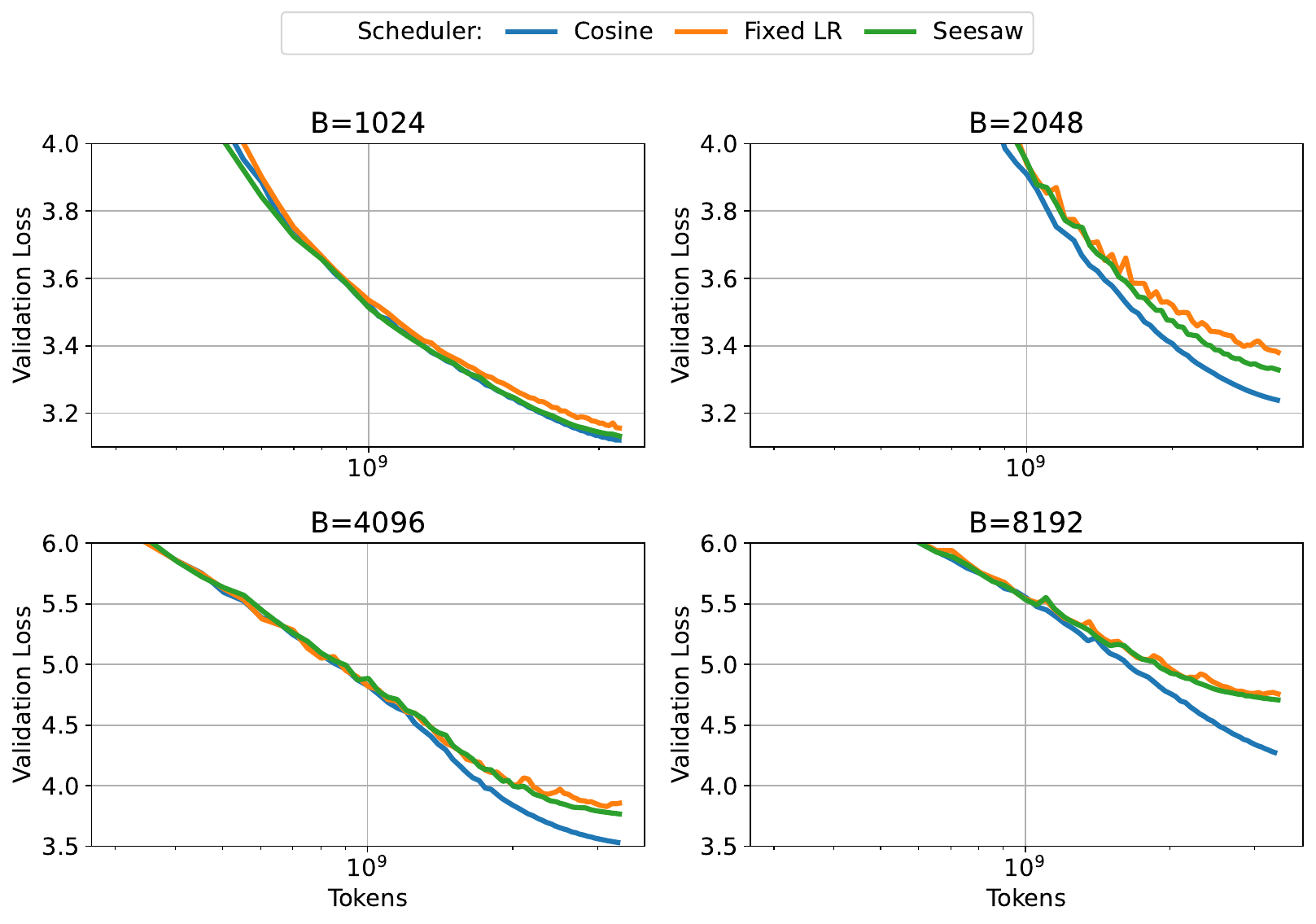}
    \caption{150M models trained past CBS (roughly 256), at batch sizes 1024, 2048, 4096 and 8192, for 3 schedulers: cosine decay (blue), constant learning rate with increasing batch size based on Seesaw (orange) and Seesaw (green). Note that none of the proposed schedules is able to match the cosine curve, with the discrepancy increasing as the batch size grows more.}
    \label{fig:worse_past_cbs}
\end{figure}
The first hypothesis could be that it is still possible to match the underlying schedule, but with a learning rate equivalence as given by $\mathtt{mean}$ dominating in the denominator. As $\mathtt{mean}$ does not scale with batch size, therefore, using the equivalence schedule of SGD could be a promising candidate. We explore this option in Figure~\ref{fig:worse_past_cbs}, and it turns out that this schedule performs even worse than the Seesaw schedule. We hypothesise that beyond a certain batch size, it is not possible to match the performance of learning rate decay by any equivalent batch size ramp up for Adam or normalized SGD, which we motivate using the following toy example.

\noindent For simplicity, we look at NGD (normalized gradient descent) in 1D, for the quadratic loss $\L(x) = \frac{1}{2} h x^2$, where $x, h \in \R$ and $h \geq 0$. Training with NGD, we have the loss gradients with respect to the parameters and the update rule:
\begin{align*}
    \nabla_{x} \L = hx && x_{t+1} := x_t - \eta h \sign(x_t)
\end{align*}
where $\sign(x_t) = \frac{x_t}{|x_t|}$. Note that if $x_t > 0$, then $x_{t+1} = x_t - \eta h$ and if $x_t < 0$, then $x_{t+1} = x_t + \eta h$, implying that the model does not reach the minimizer and instead converges to a stable cycle of $\mathcal{O}(\eta h)$ around the minimizer. In order to escape this stable cycle and reach the minimizer, it is thus necessary to decay the learning rate. Therefore, if we slightly relax the setup and think of large batch training as being close to NGD regime, we can see that further increasing the batch size does not change the dynamics. Therefore, past a certain batch size, learning rate decay is crucial for achieving a lower loss with NGD.

\section{Theoretical Analysis}
\label{sec:theoretical_analysis}
In this section we introduce the main theoretical contributions of our work. Namely, under mild assumptions, we establish a formal equivalence between learning rate decay and batch size ramp up in SGD and normalized SGD. 

\paragraph{Setup and notation.}
We use the notation $f \lesssim g$ to mean that there exists some constant $a>0$ such that $f(x) \leq ag(x)$ for any $x$. We also use the notation $f \eqsim g$ if $f(x) \lesssim g(x) \lesssim f(x)$ for all $x$. We denote the samples $(\x, y)$ where $\x \in \R^d$ and $y \in \R$, with the distribution and risk:
\begin{align*}
    \x \sim \N(0, \H) && y|\x \sim \N(\la \w^\star, \x \ra, \sigma^2) && \risk(\w) = \frac{1}{2}\E (\la \w, \x \ra  - y)^2
\end{align*}
where the expectation is over the $(\x, y)$, $\w^\star$ is the minimizer, and $\sigma^2$ is the variance of the additive noise. We also use $\risk(\w_t, \eta)$ to denote the risk at time $t$ for a process trained with $\eta$, but we drop the $\eta$ parameter when it is clear from context. We consider step decay schedules for the learning rate, where, the learning rate in the $k^{th}$ phase is denoted by $\eta_k$ and $P_k$ denotes the total number of data samples used in the $k^{th}$ phase. Similarly, for batch ramp schedules, $B_k$ denotes the batch size in the $k^{th}$ phase. For discussion, we will use the bias-variance decomposition terminology of risk~\citep{jain2018parallelizing, jain2017markov, zou2021benign, wu2022last, wu2022power, meterez2025simplified}. Informally, bias corresponds to the risk of the averaged iterates, while variance corresponds to the noise in the iterates, and $\risk(\w_t) = \bias_t + \variance_t$. We will denote the stochastic gradient at time $t$ by $g_t$ and let $\E \| \g_t \|^2$ represent its expected squared norm under the population distribution.

\subsection{Main Results}

In this section, we first introduce the main assumptions and discuss their implications, followed by the main theoretical results. Our first assumption states that the risk of the SGD process is bounded.
\begin{assumption}[Bounded risk.]
\label{assump:non_expansive_risk}
    Consider a SGD process with a given scheduling scheme for the batch size and learning rate. Let $t_0$ denote the first time when the scheduler changes either the learning rate or the batch size. Then, we assume that there exists a constant $c > 1$ such that $\risk(\w_t) \leq c \sigma^2 $ for all $ t > t_0$.
\end{assumption}
\noindent In general, we expect every ``well tuned'' scheduler to start cutting when $\risk(\w_{t_0}) \lesssim \sigma^2$, as we want to minimize the bias component of the risk before cutting down the learning rate to reduce noise in the iterates. Moreover, for a well-behaved schedule, as we expect the risk to decrease over time, this condition should hold throughout the process.
Our second assumption states that the expected gradient squared norms of the NSGD update rule are dominated by the additive noise term. 
\begin{assumption}[Variance dominated.]
\label{assump:additive_noise_dominates}
    Assume that $\E \|\g_t\|^2 \eqsim \frac{\sigma^2}{B_t}$.
\end{assumption}
\noindent Under Assumption~\ref{assump:additive_noise_dominates}, the NSGD process effectively reduces to SGD with a rescaled learning rate (Equation \ref{eq:nsgd_var_dom}), up to constant factors. Based on the previously established assumptions, we can now state the equivalence result. We use the notation $\risk(\eta_t, B_t)$ to denote the risk at time $t$ of an SGD process trained with the learning rate scheduler $\eta_t$ and batch size scheduler $B_t$.

\begin{theorem}[SGD Equivalence]
\label{thm:sgd_bounded_risks}
Fix $\tfrac{0.01}{\tr(\H)} \geq \eta>0$, $B>0$, and parameters $\alpha_1,\alpha_2>1$, $\beta_1,\beta_2>1$ with $\alpha_1\beta_1=\alpha_2\beta_2$.
Define the two phase-indexed schedules
\[
(\eta_k,B_k) := \bigl(\eta\,\alpha_1^{-k},\, B\,\beta_1^{k}\bigr),
\qquad
(\eta_k',B_k') := \bigl(\eta\,\alpha_2^{-k},\, B\,\beta_2^{k}\bigr),
\quad k=0,1,2,\dots
\]
and run two SGD procedures in phases $k=0,1,\dots$ so that, in phase $k$, each procedure processes the same number of samples (possibly depending on $k$) under its respective schedule. 
Let $\risk(\eta_k,B_k)$ and $\risk(\eta_k',B_k')$ denote the (population) risk of the two procedures at the end of phase $k$.
 If Assumption \ref{assump:non_expansive_risk} holds (for both procedures) with constant $c$, then
\[
\risk(1.01\cdot \eta_k',\, B_k') \;\lesssim_c\; \risk(\eta_k,\, B_k)
\;\lesssim_c\; \risk(\eta_k',\, B_k'),
\]
where $\risk(\lambda\cdot \eta_k',B_k')$ denotes the risk of the second procedure when its entire learning-rate schedule is multiplied by a uniform factor $\lambda>0$, and $A\lesssim_c B$ means $A\le C(c)\,B$ for a numerical constant $C(c)$ depending only on $c$ (and absolute constants).
\end{theorem}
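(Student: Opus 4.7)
My plan is to track both procedures phase by phase using the bias--variance decomposition $\risk(\w_t)=\bias_t+\variance_t$ in the eigenbasis of $\H$, following the framework of~\citet{meterez2025simplified,zou2021benign,wu2022last}. The decisive observation is that the constraint $\alpha_1\beta_1=\alpha_2\beta_2$, together with the same-samples-per-phase requirement, preserves the two quantities that govern SGD per phase: the effective contraction time $\eta_k(P_k/B_k)=\eta P_k/(B(\alpha_1\beta_1)^k)$, which controls bias shrinkage along each eigendirection, and the stationary noise level $\eta_k/B_k=\eta/(B(\alpha_1\beta_1)^k)$, which controls the equilibrium variance. Both quantities are \emph{identical} across the two schedules, so the proof reduces to showing that bias and variance each match up to constants, and the $1.01$ slack in the lower bound has to absorb only the small second-order corrections to these first-order identities.

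\textbf{Bias step.} Diagonalizing in the eigenbasis of $\H$, the bias along eigenvalue $\lambda$ is multiplied by $(1-\eta_k\lambda)^{2P_k/B_k}$ in phase $k$. The assumption $\eta\tr(\H)\le 0.01$ forces $\eta_k\lambda\le 0.01$ uniformly, so I expand $\log(1-x)=-x-x^2/2-\cdots$: the first-order terms summed over phases equal the effective-time quantity, which is identical for the two schedules by the invariant above; the phase-$j$ second-order correction is at most $\eta_j\lambda\cdot A_j^{(\lambda)}$, where $A_j^{(\lambda)}=\eta\lambda P_j/(B(\alpha_1\beta_1)^j)$. Thus the ratio of per-coordinate contractions between the two procedures is $\exp\bigl(O(\sum_j\eta_j\lambda\cdot A_j^{(\lambda)})\bigr)$ with $\eta_j\lambda\le 0.01$. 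Assumption~\ref{assump:non_expansive_risk} bounds the bias by $c\sigma^2$ at every phase boundary, which controls $\sum_j A_j^{(\lambda)}$ precisely on the eigendirections that contribute meaningfully (the rest have contracted to exponentially small contributions), yielding a uniform constant-factor bias equivalence. In the lower-bound direction, the $1.01$ learning-rate inflation of procedure 2 contributes an additional deterministic contraction $\exp(-0.01\sum_j A_j^{(\lambda)})$ per eigendirection, which is exactly what is needed to cancel the residual second-order slack and dominate procedure 1 in every eigenmode.

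\textbf{Variance step.} I unroll the one-phase second-moment recursion $\Sigma_{t+1}\preceq(I-\eta_k\H)\Sigma_t(I-\eta_k\H)+(\eta_k^2\sigma^2/B_k)\H$ and write the variance at end of phase $k$ as a sum over $j\le k$ of noise injections from phase $j$, subsequently contracted by phases $j{+}1,\dots,k$. The per-phase injection scales as $\eta_j/B_j$ (equal across schedules by the stationary-level invariant) and the subsequent contraction is controlled by the same argument as in the bias step. Hence the two variances agree up to constants, and the $1.01$ learning-rate inflation only rescales each injection by the same factor, absorbed in the final constant.

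\textbf{Main obstacle.} The hardest technical ingredient is that within a single phase, the number of steps $P_k/B_k$ may be too small for the variance to actually reach stationarity, so I need a non-asymptotic per-phase variance bound depending on $(\eta_k,B_k,P_k)$ \emph{only} through the two invariants identified above. I would appeal to the tight per-phase bias/variance bounds of~\citet{zou2021benign,wu2022last,meterez2025simplified}, which already take this form, and then chain them across phases by induction on $k$, using Assumption~\ref{assump:non_expansive_risk} at each phase boundary to prevent the constants from compounding as $k$ grows.
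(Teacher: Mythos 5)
Your high-level route is the same as the paper's: diagonalize in the eigenbasis of $\H$, compare the two schedules phase by phase through the invariants $\eta_k P_k/B_k$ (first-order contraction exponent) and $\eta_k/B_k$ (noise-injection level), and use the $1.01$ inflation to absorb the second-order mismatch between $(1-\eta x/\alpha_1^k)^{2\beta_2^k}$ and $(1-\eta x/\alpha_2^k)^{2\beta_1^k}$ --- this is exactly the content of the paper's Lemma~\ref{lem:contractions}. However, there is a genuine gap in your variance step. The one-phase recursion you write, $\SSigma_{t+1}\preceq(\I-\eta_k\H)\SSigma_t(\I-\eta_k\H)+(\eta_k^2\sigma^2/B_k)\H$, is false in that direction: the exact second-moment recursion contains the additional PSD terms $\tfrac{\eta_k^2}{B_k}\H\SSigma_t\H$ and $\tfrac{\eta_k^2}{B_k}\tr(\H\SSigma_t)\H$, so dropping them yields a \emph{lower} bound, not an upper bound. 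Worse, the $\tr(\H\SSigma_t)\H$ term couples all eigenmodes, so the per-mode bookkeeping on which your entire argument rests does not close until this term is controlled. That is precisely where Assumption~\ref{assump:non_expansive_risk} is actually needed: $\tr(\H\SSigma_t)\le 2c\sigma^2$ lets one fold the coupling term into the noise injection at the cost of a $(1+2c)$ factor, after which the recursion genuinely decouples. Relatedly, you invoke Assumption~\ref{assump:non_expansive_risk} in the bias step to control $\sum_j A_j^{(\lambda)}$ ``on the eigendirections that contribute meaningfully,'' but it is neither needed nor obviously sufficient there: the correct comparison is a pointwise scalar inequality per phase and per eigenvalue, holding for every $x=\eta\lambda_i$ using only $\eta\,\tr(\H)\le 0.01$ and the $1.01$ inflation, with no reference to the risk level or to which modes matter.

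A secondary, lesser issue: you correctly flag that a phase may be too short for the variance to equilibrate, but you outsource the fix to external per-phase bounds. The paper resolves it directly by keeping the exact partial geometric sums $\sum_{i=0}^{\beta_2^k-1}(\I-\eta\LLambda/\alpha_1^k)^{2i}$, bounding the resolvent via Lemma~\ref{lem:inv_lambda}, and comparing the resulting partial sums across the two schedules again via Lemma~\ref{lem:contractions}; no stationarity is assumed anywhere. Your plan is repairable along these lines, but as written the variance step --- and the placement of Assumption~\ref{assump:non_expansive_risk} --- does not yet constitute a proof.
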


We defer the full proof to Appendix~\ref{app:proofs}. Now, we extend this result to Normalized SGD. Under Assumption~\ref{assump:additive_noise_dominates}, NSGD reduces to SGD with a rescaled learning rate $\neta \eqsim \eta \frac{\sqrt{B}}{\sigma \sqrt{\tr(\H)}}$ (Equation \ref{eq:nsgd_var_dom}). Consequently, we can extend Theorem~\ref{thm:sgd_bounded_risks} to the normalized SGD case. We formalize this in the following corollary:
\begin{corollary}[Normalized SGD Equivalence]
\label{cor:normalized_sgd_bounded_risks}
Fix $\tfrac{0.01}{\tr(\H)} \geq \eta>0$, $B>0$, and parameters $\alpha_1,\alpha_2>1$, $\beta_1,\beta_2>1$ with $\alpha_1\sqrt\beta_1=\alpha_2\sqrt\beta_2$.
Define the two phase-indexed schedules
\[
(\eta_k,B_k) := \bigl(\eta\,\alpha_1^{-k},\, B\,\beta_1^{k}\bigr),
\qquad
(\eta_k',B_k') := \bigl(\eta\,\alpha_2^{-k},\, B\,\beta_2^{k}\bigr),
\quad k=0,1,2,\dots
\]
and run two normalized SGD procedures in phases $k=0,1,\dots$ so that, in phase $k$, each procedure processes the same number of samples (possibly depending on $k$) under its respective schedule. 
Let $\risk(\eta_k,B_k)$ and $\risk(\eta_k',B_k')$ denote the (population) risk of the two procedures at the end of phase $k$.
 If Assumption \ref{assump:non_expansive_risk} and \ref{assump:additive_noise_dominates} holds (for both procedures) with constant $c$, then
\[
\risk(1.01\cdot \eta_k',\, B_k') \;\lesssim_c\; \risk(\eta_k,\, B_k)
\;\lesssim_c\; \risk(\eta_k',\, B_k'),
\]
where $\risk(\lambda\cdot \eta_k',B_k')$ denotes the risk of the second procedure when its entire learning-rate schedule is multiplied by a uniform factor $\lambda>0$, and $A\lesssim_c B$ means $A\le C(c)\,B$ for a numerical constant $C(c)$ depending only on $c$ (and absolute constants).
\end{corollary}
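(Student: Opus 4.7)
The plan is to reduce the normalized SGD dynamics to an SGD instance with a rescaled learning rate via Assumption~\ref{assump:additive_noise_dominates}, and then invoke Theorem~\ref{thm:sgd_bounded_risks} directly on the reduced SGD schedules. Concretely, under Assumption~\ref{assump:additive_noise_dominates}, $\sqrt{\E\|\g_t\|^2}\eqsim \sigma/\sqrt{B_t}$, so a phase with NSGD learning rate $\eta_k$ and batch $B_k$ behaves (up to constants) like an SGD phase with effective learning rate
\[
\neta_k \;\eqsim\; \eta_k\cdot\frac{\sqrt{B_k}}{\sigma\sqrt{\tr(\H)}}
\]
(this is exactly Equation~\ref{eq:nsgd_var_dom}, absorbing the $\sqrt{\tr(\H)}$ normalization that appears in the NSGD derivation of Appendix~\ref{app:normalized_sgd_analysis}). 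Thus each NSGD procedure in the corollary can be rewritten as an SGD procedure with the same per-phase sample counts, the same batch-size ramp $B_k$, but a transformed learning-rate schedule.

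The second step is the algebraic bookkeeping to check that the assumption $\alpha_1\sqrt{\beta_1}=\alpha_2\sqrt{\beta_2}$ translates into the hypothesis of Theorem~\ref{thm:sgd_bounded_risks}. Substituting $(\eta_k,B_k)=(\eta\,\alpha_i^{-k},B\,\beta_i^{k})$ into the reduction above gives effective SGD schedules $(\neta_k^{(i)},B_k^{(i)})=\bigl(\neta\,\tilde\alpha_i^{-k},B\,\beta_i^{k}\bigr)$, where $\neta\eqsim \eta\sqrt{B}/(\sigma\sqrt{\tr(\H)})$ and the effective decay factor is
\[
\tilde\alpha_i \;=\; \alpha_i/\sqrt{\beta_i}.
\]
The SGD equivalence condition $\tilde\alpha_1\beta_1=\tilde\alpha_2\beta_2$ then becomes $\alpha_1\sqrt{\beta_1}=\alpha_2\sqrt{\beta_2}$, which is precisely the hypothesis of the corollary. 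Applying Theorem~\ref{thm:sgd_bounded_risks} to these two effective SGD processes and noting that the NSGD and reduced-SGD risks agree up to constants depending only on $c$ then yields the stated sandwich bound, with the uniform $1.01$ rescaling of the learning-rate schedule inherited directly from the SGD theorem.

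The main technical obstacle is ensuring that the hypotheses of Theorem~\ref{thm:sgd_bounded_risks} still apply after the reduction. First, the theorem requires $\neta\leq 0.01/\tr(\H)$; this must be verified (or stated as a mild additional condition) for the rescaled rate $\eta\sqrt{B}/(\sigma\sqrt{\tr(\H)})$, and I would carry the requirement through as a standing assumption so the reduction is valid at the initial phase (and is preserved thereafter since $\tilde\alpha_i>1$ in Procedure~1 while Procedure~2's ramp only increases $\neta_k$ by a bounded factor per phase, absorbed into the constants). Second, Assumption~\ref{assump:non_expansive_risk} is stated for an SGD process, so one needs to observe that, under Assumption~\ref{assump:additive_noise_dominates}, the NSGD iterates coincide (up to constants) with those of the reduced SGD process, so a bounded-risk hypothesis on the NSGD procedures transfers to the reduced SGD procedures with a constant $c'\eqsim c$. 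The remaining constants from the $\eqsim$ relations in Assumption~\ref{assump:additive_noise_dominates} are swept into $\lesssim_c$, which is why the conclusion takes the same sandwich form as in Theorem~\ref{thm:sgd_bounded_risks}.
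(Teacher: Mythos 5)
Your proposal matches the paper's own argument: the paper likewise proves the corollary by invoking Assumption~\ref{assump:additive_noise_dominates} to rewrite NSGD as SGD with effective learning rate $\neta_k \eqsim \eta_k\sqrt{B_k}/(\sigma\sqrt{\tr(\H)})$ (Equation~\ref{eq:nsgd_var_dom}), observing that the effective decay factor becomes $\alpha_i/\sqrt{\beta_i}$ so that the SGD condition $\tilde\alpha_1\beta_1=\tilde\alpha_2\beta_2$ turns into $\alpha_1\sqrt{\beta_1}=\alpha_2\sqrt{\beta_2}$, and then applying Theorem~\ref{thm:sgd_bounded_risks}. Your bookkeeping of the constants and of the transfer of Assumption~\ref{assump:non_expansive_risk} is, if anything, more careful than the paper's one-line reduction.
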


\section{Discussion and Conclusions}
In this work, we provide a rigorous analysis of batch ramp through the lens of noisy linear regression, offering theoretical insight into a practice that has thus far been guided largely by empirical heuristics. Building upon this analysis, we introduce Seesaw (Algorithm~\ref{alg:seesaw}), a principled batch size scheduling algorithm that can serve as a drop-in replacement for commonly used learning-rate schedules in adaptive optimizers such as Adam.

Our empirical evaluation demonstrates that Seesaw matches the performance of standard cosine annealing schedules while reducing the serial training runtime by approximately $36\%$. These results indicate that dynamically balancing the learning rate and batch size provides an effective means to accelerate training without compromising performance.

\section*{Acknowledgements}
AM would like to thank Jacob Zavatone-Veth and Alex Damian for helpful discussions. The authors would also like to thank Max Shad and Bala Desinghu for their help with the cluster. AM, DM acknowledge the support of a Kempner Institute Graduate Research Fellowship. CP
is supported by an NSF CAREER Award (IIS-2239780), DARPA grants DIAL-FP-038 and AIQ-HR00112520041, the
Simons Collaboration on the Physics of Learning and Neural Computation, and the William F. Milton Fund from
Harvard University. AM, SK and DM
acknowledge that this work has been made possible in part by a gift from the Chan Zuckerberg
Initiative Foundation to establish the Kempner Institute for the Study of Natural and Artificial
Intelligence. SK and DM acknowledge support from the Office of Naval Research under award N0001422-1-2377 and the National Science Foundation Grant under award \#IIS 2229881. DM is also supported by a Simons Investigator Fellowship, NSF grant DMS-2134157, DARPA grant W911NF2010021,and DOE grant DE-SC0022199.

\bibliography{references}
\bibliographystyle{unsrtnat}
\clearpage
\appendix

\section{Proofs for Section~\ref{sec:theoretical_analysis}}
\subsection{Preliminaries}
We take as a convention for eigenvalues ordering $\lambda_{\max} = \lambda_1 \geq \lambda_2 \geq \dots > 0$. For two matrices $\A$ and $\B$ we use the notation $\A \preceq \B$ to denote that $\B - \A$ is positive semi-definite (PSD). We denote $\la \u, \v \ra$ for the inner product between $\u$ and $\v$. Moreover, with a slight abuse of notation, we use the notation $\leq$ as elementwise comparison, namely $\u \leq \v$ if $\u_i \leq \v_i$ for all $i$ and $\A \leq \B$ if $\A_{ij} \leq \B_{ij}$ for all $i, j$. 
To simplify the analysis, we will follow the approach of~\citet{meterez2025simplified} and work in the eigenbasis of the data covariance $\H$. For the sake of completeness, we restate the main derivation for the bias and variance iterates in the case of constant learning rate and constant batch size, starting from the SGD update rule:
\begin{align}
    &\w_{t+1} - \w^\star = \bigg(\I - \frac{\eta}{B} \sum_{i=1}^B \x_i\x_i^\top\bigg)(\w_t - \w^\star) - \frac{\eta}{B} \sum_{i=1}^B \x_i\eps_i \notag \\
    \implies &\SSigma_{t+1} = \SSigma_t - \eta \SSigma_t \H - \eta \H \SSigma_t + \eta^2 \left(1+ \frac{1}{B}\right) \H \SSigma_t \H + \frac{\eta^2}{B} \tr(\H \SSigma_t)\H + \frac{\eta^2}{B} \sigma^2 \I \notag \\
    \implies &\M_{t+1} = \M_t - \eta \M_t \LLambda - \eta \LLambda \M_t + \eta^2\left(1 + \frac{1}{B}\right) \LLambda \M_t \LLambda + \frac{\eta^2}{B} \tr(\LLambda \M_t) \LLambda + \frac{\eta^2}{B} \sigma^2 \I \label{eq:recursion_Mt}
\end{align}
where in the last equation $\M_t = \Q \SSigma_t \Q^\top$ is the iterate covariance matrix rotated in the eigenbasis of $\H$. Since we can write the excess risk as:
\begin{align*}
    \risk(\w_t) - \risk(\w^\star) = \frac{1}{2} \tr(\LLambda \M_t) = \frac{\1}{2} \la \lambda, \m_t \ra
\end{align*}
where $\m_t = \diag(\M_t)$, it suffices to push a $\diag$ operator through equation~\eqref{eq:recursion_Mt}. Finally, we get:
\begin{align*}
    \m_{t+1} = \underbrace{\left[\I - 2 \eta \LLambda + \eta^2 \left(1 + \frac{1}{B} \right) \LLambda^2 + \frac{\eta^2}{B} \lambda \lambda^\top  \right]}_{\A} \m_t + \frac{\eta^2 \sigma^2}{B} \lambda = \A^t \m_0 + \frac{\eta^2 \sigma^2}{B} \sum_{i=0}^{t-1} \A^{i} \lambda 
\end{align*}
where $\bm_t := \A^t \m_0$ and $\vm_t := \frac{\eta^2 \sigma^2}{B} \sum_{i=0}^{t-1} \A^{i} \lambda$ are the bias and variance iterates respectively.

\label{app:proofs}

Before we begin proving the main statements, we introduce several helpful lemmas that we will use.

\begin{lemma}
    \label{lem:inv_lambda}
    For $\eta \leq 0.01/\tr(\H)$ and $\alpha \geq 1$, we have  the elementwise inequality:

    \begin{align*}
        \frac{\alpha^k}{\eta} \1 \geq \left(\I - \left(\I - \frac{\eta}{\alpha^k} \LLambda\right)^2 \right)^{-1} \lambda \geq \frac{\alpha^k}{2 \eta} \1
    \end{align*}
\begin{proof}
    We have:
    \begin{align*}
        \left(\I - \left(\I - \frac{\eta}{\alpha^k} \LLambda\right)^2 \right)^{-1} &= \left( \I - \left(\I + \frac{\eta^2}{\alpha^{2k}} \LLambda^2 - 2\frac{\eta}{\alpha^2} \LLambda \right)  \right)^{-1} \\
        &= \left( \frac{\eta}{\alpha^k} \LLambda \left(2 -  \frac{\eta}{\alpha^k} \LLambda  \right) \right)^{-1} \\
        &\geq \left( \frac{2 \eta}{\alpha^k} \LLambda \right)^{-1}
    \end{align*}
    Note that trivially we also have the other direction by noticing that $\frac{1}{2 - \frac{\eta}{\alpha^k} \lambda} \leq 1$. Multiplying by $\lambda$ gives us the conclusion. 
\end{proof}
\end{lemma}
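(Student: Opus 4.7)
The plan is to exploit the fact that $\LLambda$ is diagonal, so the inverse, products, and componentwise multiplication by $\lambda$ all reduce to scalar computations on each eigenvalue $\lambda_i$. Writing $c := \eta/\alpha^k$ for brevity, I would first expand
\[
\I - (\I - c\LLambda)^2 \;=\; 2c\LLambda - c^2\LLambda^2 \;=\; c\LLambda\,(2\I - c\LLambda),
\]
which is diagonal with $i$-th diagonal entry $c\lambda_i\,(2 - c\lambda_i)$. Using the hypothesis $\eta \leq 0.01/\tr(\H)$ together with $\alpha \geq 1$ and the trivial bound $\lambda_i \leq \tr(\H)$, one has $c\lambda_i \leq \eta\,\tr(\H) \leq 0.01$, so each diagonal entry is strictly positive and the matrix is well-defined and invertible.

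Next I would invert componentwise and multiply by $\lambda$. The $i$-th coordinate of the vector of interest becomes
\[
\frac{\lambda_i}{c\lambda_i\,(2 - c\lambda_i)} \;=\; \frac{1}{c\,(2 - c\lambda_i)} \;=\; \frac{\alpha^k}{\eta\,(2 - c\lambda_i)}.
\]
At this point the lemma reduces to bounding the scalar $1/(2 - c\lambda_i)$ from above by $1$ and from below by $1/2$. The lower bound is immediate since $c\lambda_i > 0$ forces $2 - c\lambda_i < 2$, giving the $\frac{\alpha^k}{2\eta}$ side. The upper bound follows from the calibration $c\lambda_i \leq 0.01 \leq 1$, which guarantees $2 - c\lambda_i \geq 1$ and therefore the $\frac{\alpha^k}{\eta}$ side. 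Because these bounds hold for every eigenvalue, assembling them into elementwise vector inequalities yields exactly the two inequalities in the statement.

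There is essentially no real obstacle: the argument is a routine diagonalization followed by a one-variable bound, and the matrix inverse in the statement is unambiguous precisely because $\LLambda$ is diagonal with positive entries. The only care needed is in verifying that the hypothesis $\eta \leq 0.01/\tr(\H)$ is used correctly to keep $c\lambda_i \leq 1$ uniformly in $k \geq 0$ (which uses $\alpha \geq 1$ so that $c \leq \eta$) and all eigenvalues (which uses $\lambda_i \leq \tr(\H)$); this is what pins $2 - c\lambda_i$ inside $[1, 2)$ and thereby produces clean constant prefactors $1$ and $1/2$ on the two sides.
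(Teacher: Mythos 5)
Your proof is correct and follows essentially the same route as the paper's: factor $\I - (\I - \tfrac{\eta}{\alpha^k}\LLambda)^2 = \tfrac{\eta}{\alpha^k}\LLambda(2\I - \tfrac{\eta}{\alpha^k}\LLambda)$, reduce to the scalar quantity $2 - \tfrac{\eta}{\alpha^k}\lambda_i$, and pin it inside $[1,2)$ using $\eta \le 0.01/\tr(\H)$ and $\alpha \ge 1$. If anything, you are slightly more explicit than the paper about where the step-size hypothesis is needed for the upper bound, which the paper dismisses as trivial.
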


\begin{lemma}
    \label{lem:contractions}
    For $\eta \leq 0.01/\tr(\H)$ and $\alpha_1, \alpha_2, \beta_1, \beta_2 \geq 1$ such that $\alpha_1 \beta_1 = \alpha_2 \beta_2$ and $\alpha_1 \leq \alpha_2$, we have:
    \begin{align*}
        \left(\I - \frac{1.01\, \eta}{\alpha_2^k} \LLambda \right)^{2 \beta_1^k} 
        \preceq 
        \left(\I - \frac{\eta}{\alpha_1^k} \LLambda \right)^{2 \beta_2^k} 
        \preceq 
        \left(\I - \frac{\eta}{\alpha_2^k} \LLambda \right)^{2 \beta_1^k}.
    \end{align*}
\begin{proof}
    \textbf{RHS bound.} Since both sides are diagonal matrices, it suffices to prove the scalar inequality for every $x = \eta \lambda_i$:
    \begin{align*}
        \left(1 - \frac{x}{\alpha_1^k} \right)^{2 \beta_2^k} 
        \leq 
        \left(1 - \frac{x}{\alpha_2^k} \right)^{2 \beta_1^k}.
    \end{align*}

    Taking logarithms and defining 
    \[
        f(x) = \frac{2 \beta_2^k \log(1 - x/\alpha_1^k)}{2 \beta_1^k \log(1 - x/\alpha_2^k)} 
        = \frac{\alpha_1^k \log(1-x/\alpha_1^k)}{\alpha_2^k \log(1 - x/\alpha_2^k)} 
        = \frac{g(\alpha_1)}{g(\alpha_2)},
    \]
    where $g(y) = y \log(1 - x/y)$. For $0 < x < 1$ and $y > 1$, $g(y)$ is monotonically increasing, so for $\alpha_1 \leq \alpha_2$, we have $g(\alpha_1) \leq g(\alpha_2)$ and hence $g(\alpha_1)/g(\alpha_2) \geq 1$ (since $g(\alpha_2) < 0$). Thus $f(x) \geq 1$, which proves the RHS inequality.

    \textbf{LHS bound.} Similarly, we use the scalar inequality and the bounds
    \[
        -x - \frac{x^2}{2} \geq \ln(1-x) \geq -x - x^2.
    \]
    Since $\ln(\cdot)$ is monotone, we apply it to both sides:
    \begin{align*}
        \beta_1^k \ln\!\left(1 - \frac{1.01}{\alpha_2^k}x \right)
        &\leq 
        \beta_1^k \!\left( -\frac{1.01}{\alpha_2^k} x - \frac{1.01^2}{2 \alpha_2^{2k}} x^2\right), \\
        \beta_2^k \ln\!\left(1 - \frac{1}{\alpha_1^k} x \right)
        &\geq 
        \beta_2^k \!\left(-\frac{1}{\alpha_1^k}x -\frac{1}{\alpha_1^{2k}}x^2  \right).
    \end{align*}
    It suffices to prove that:
    \begin{align*}
        \beta_1^k \!\left( -\frac{1.01}{\alpha_2^k} x - \frac{1.01^2}{2 \alpha_2^{2k}} x^2\right)
        \geq 
        \beta_2^k \!\left(-\frac{1}{\alpha_1^k}x -\frac{1}{\alpha_1^{2k}}x^2  \right).
    \end{align*}
    Using $\frac{\beta_1}{\alpha_2} = \frac{\beta_2}{\alpha_1}$ and $\frac{\beta_1}{\alpha_2^2} = \frac{\beta_2}{\alpha_1 \alpha_2}$, we obtain:
    \begin{align*}
        &\frac{1}{\alpha_1^k} (1.01)  
        + \frac{1}{2\alpha_1^k \alpha_2^k }(1.01)^2 x
        - \frac{1}{\alpha_1^k}
        - \frac{1}{\alpha_1^{2k}} x \geq 0, \\
        \iff &x \leq \frac{0.01}{\frac{1}{\alpha_1^k} - \frac{1.01^2}{2\alpha_2^k}}.
    \end{align*}
    Using $\alpha_1 \leq \alpha_2$, we get
    \[
        x \leq \frac{\alpha_1^k \cdot 0.01}{1 - \frac{1.01^2}{2}},
    \]
    which holds automatically under $\eta \leq 0.01 / \tr(\H)$ and $x = \eta \lambda_i$. This concludes the proof.
\end{proof}
\end{lemma}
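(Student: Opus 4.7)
The plan is to reduce the matrix inequality to a family of scalar inequalities by observing that $\LLambda$ is diagonal (its entries are the eigenvalues of $\H$), so all four matrices involved are diagonal and commute. Per coordinate the inequality becomes a pair of scalar inequalities in $x := \eta \lambda_i$. The assumption $\eta \leq 0.01/\tr(\H)$ together with $\lambda_i \leq \tr(\H)$ guarantees $x \in (0, 0.01]$, a range that will be crucial both for keeping the bases $1 - cx/\alpha^k$ strictly positive and for controlling the Taylor remainder of $\log(1-\cdot)$.

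For the right-hand inequality, I would take logarithms (both sides are positive), giving
\begin{align*}
    2\beta_2^k \log(1 - x/\alpha_1^k) \;\leq\; 2\beta_1^k \log(1 - x/\alpha_2^k).
\end{align*}
Using the balance $\alpha_1\beta_1 = \alpha_2\beta_2$, hence $\beta_2^k/\beta_1^k = (\alpha_1/\alpha_2)^k$, this rearranges to the claim that $g(\alpha_1^k) \leq g(\alpha_2^k)$ (both negative), for $g(y) := y\log(1 - x/y)$. I would then verify monotonicity of $g$ on $y>x$ by computing $g'(y) = \log(1-x/y) + x/(y-x)$ and, after the substitution $u = x/y \in [0,1)$, showing that $\log(1-u) + u/(1-u)$ is nonnegative (its derivative in $u$ is $u/(1-u)^2 \geq 0$ and it vanishes at $u=0$). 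Combined with $\alpha_1 \leq \alpha_2$, this yields the desired bound.

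For the left-hand inequality, I would again take logarithms and apply the two-sided estimate $-x - x^2 \leq \log(1-x) \leq -x - x^2/2$ (valid on $(0,0.01]$). This produces a quadratic-in-$x$ upper bound for $\beta_1^k \log(1 - 1.01 x/\alpha_2^k)$ and a matching quadratic-in-$x$ lower bound for $\beta_2^k \log(1 - x/\alpha_1^k)$. Invoking once more the identity $\beta_1/\alpha_2 = \beta_2/\alpha_1$, the linear terms on both sides coincide up to the factor $1.01$, leaving a positive slack of order $0.01\cdot x\cdot\beta_2^k/\alpha_1^k$ from the linear parts, which must dominate the combined quadratic correction. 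A short calculation reduces this to checking that $x$ times a bracket bounded by $1/\alpha_1^k \leq 1$ stays under $0.01$, which is exactly guaranteed by $x \leq 0.01$.

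The main obstacle is this last balancing step for the LHS: the quadratic coefficient arising from combining the two Taylor-remainder bounds may have either sign depending on the relative magnitudes of $\alpha_1$ and $\alpha_2$, and when it is positive one must bound it crudely by $1/\alpha_1^k$. This makes the bound essentially tight in the sense that the interplay between the $1.01$ slack and the constant $0.01$ in $\eta \leq 0.01/\tr(\H)$ cannot be loosened without also relaxing one of these constants. Everything else (diagonalization, monotonicity argument for the RHS, and the Taylor sandwich) is routine once this alignment is set up carefully.
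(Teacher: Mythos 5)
Your proposal is correct and follows essentially the same route as the paper's proof: diagonalize to reduce to scalar inequalities in $x=\eta\lambda_i\le 0.01$, prove the right-hand bound via monotonicity of $g(y)=y\log(1-x/y)$ after using $\alpha_1\beta_1=\alpha_2\beta_2$, and prove the left-hand bound via the Taylor sandwich $-x-x^2\le\log(1-x)\le -x-x^2/2$ together with $\beta_1/\alpha_2=\beta_2/\alpha_1$, closing with the $0.01$ slack against the quadratic remainder. Your explicit verification that $g'(y)\ge 0$ is a detail the paper merely asserts, but the overall argument is the same.
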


\subsection{Proofs of Main Statements}

\begin{proof}[Proof of Theorem~\ref{thm:sgd_bounded_risks}.]   
Consider $2$ processes: process $1$ will have a learning rate step decay factor of $\alpha_1$ and a batch size ramp up factor of $\beta_1$ and process $2$ will have $\alpha_2$ and $\beta_2$ respectively. Define the transition matrices:

\begin{align*}
    \A_k &= \left[ \left( \I - \frac{\eta}{\alpha_1^k} \LLambda \right)^2 + \frac{\eta^2}{B \alpha_1^{2k} \beta_1^k} (\LLambda^2 + \lambda \lambda^\top) \right] \\
    \C_k &= \left[ \left( \I - \frac{\eta}{\alpha_2^k} \LLambda \right)^2 + \frac{\eta^2}{B \alpha_2^{2k} \beta_2^k} (\LLambda^2 + \lambda \lambda^\top) \right]
\end{align*}

Denote process $1$ as $\m_k(\eta)$ and process $2$ as $\r_k(\eta)$ where they depend on the base learning rate $\eta$ - note that we skip the indexing on $\eta$ when it is clear from context. In order to keep both the per stage data count, $\m_k$ does $\beta_2^k P_k$ steps per stage, and $\r_k$ does $\beta_1^k P_k$ steps per stage. We begin by establishing the upper bound first. Note that we assume that $\alpha_1 \beta_1 = \alpha_2 \beta_2$, and without loss of generality due to symmetry, that $\beta_1 \geq \beta_2$ (and consequently $\alpha_1 \leq \alpha_2$).

\paragraph{Upper bound. } 
Before we begin, we introduce the idea behind the proof. 
We define $M_k = \beta_1^k P_k$ and $N_k = \beta_2^k P_k$. 
The derivation proceeds by unrolling the recurrence first over a single step, then over $\beta_2^k$ steps, and finally over $P_k$ stages.

\medskip
\begin{align*}
    \m_{N_{1:k}} 
    &\leq \A_k \m_{N_{1:k} - 1} + \frac{\eta^2 \sigma^2}{B \alpha_1^{2k} \beta_1^k} \lambda \\
    &\leq \left(\I - \frac{\eta}{\alpha_1^k} \LLambda \right)^2 \m_{N_{1:k} - 1} 
        + (1+2c) \frac{\eta^2 \sigma^2}{B \alpha_1^{2k} \beta_1^k} \lambda,
\end{align*}
which follows from Assumption~\ref{assump:non_expansive_risk}.

\medskip
\begin{align*}
    \m_{N_{1:k}} 
    &\leq \left(\I - \frac{\eta}{\alpha_1^k} \LLambda \right)^{2 \beta_2^k} \m_{N_{1:k} - \beta_2^k}
    + (1+2c) \frac{\eta^2 \sigma^2}{B \alpha_1^{2k} \beta_1^k}
      \sum_{i=0}^{\beta_2^k - 1} \left(\I - \frac{\eta}{\alpha_1^k} \LLambda \right)^{2i} \lambda \\
    &\leq \left(\I - \frac{\eta}{\alpha_1^k} \LLambda \right)^{2 \beta_2^k} \m_{N_{1:k} - \beta_2^k}
    + (1+2c) \frac{\eta^2 \sigma^2}{B \alpha_1^{2k} \beta_1^k}
      \left[\I - \left(\I - \frac{\eta}{\alpha_1^k} \LLambda \right)^{2 \beta_2^k} \right]
      \left[\I - \left(\I - \frac{\eta}{\alpha_1^k} \LLambda \right)^{2} \right]^{-1} \lambda.
\end{align*}

\medskip
Applying Lemma~\ref{lem:inv_lambda}, we have:
\begin{align*}
    \m_{N_{1:k}}
    &\leq \left(\I - \frac{\eta}{\alpha_1^k} \LLambda \right)^{2 \beta_2^k} \m_{N_{1:k} - \beta_2^k}
    + (1+2c) \frac{\eta \sigma^2}{B \alpha_1^k \beta_1^k}
      \left[\I - \left(\I - \frac{\eta}{\alpha_1^k} \LLambda \right)^{2 \beta_2^k} \right] \1 \\
    &\leq \left(\I - \frac{\eta}{\alpha_1^k} \LLambda \right)^{2 \beta_2^k} \m_{N_{1:k} - \beta_2^k}
    + 2(1+2c) \frac{\eta^2 \sigma^2}{B} 
      \left(\frac{\beta_2}{\alpha_1^2 \beta_1}\right)^k \lambda.
\end{align*}

\medskip
By Lemma~\ref{lem:contractions}, we can replace the term with one involving $(\alpha_2, \beta_1)$:
\begin{align*}
    \m_{N_{1:k}} 
    &\leq \left(\I - \frac{\eta}{\alpha_2^k} \LLambda \right)^{2 \beta_1^k} \m_{N_{1:k} - \beta_2^k}
    + 2(1+2c) \frac{\eta^2 \sigma^2}{B} 
      \left(\frac{\beta_2}{\alpha_1^2 \beta_1}\right)^k \lambda.
\end{align*}

\medskip
Following, we can unroll over $P_k$:
\begin{align*}
    \m_{N_{1:k}}
    &\leq \left(\I - \frac{\eta}{\alpha_2^k} \LLambda \right)^{2 M_k} \m_{N_{1:k-1}} 
    + 2(1+2c) \frac{\eta^2 \sigma^2}{B} 
      \left(\frac{\beta_2}{\alpha_1^2 \beta_1}\right)^k
      \sum_{i=0}^{P_k-1} 
      \left(\I - \frac{\eta}{\alpha_2^k} \LLambda \right)^{2 \beta_1^k i}\lambda.
\end{align*}

Finally, recursively unrolling across $k$ yields:
\begin{align*}
    \m_{N_{1:k}} 
    &\leq \left[\prod_{s=1}^{k}\left(\I - \frac{\eta}{\alpha_2^s} \LLambda \right)^{2 M_s} \right] \m_{0} \\
    &\quad + 2(1+2c) \frac{\eta^2 \sigma^2}{B }  
    \sum_{r=1}^{k} \left(\frac{1}{\alpha_1 \alpha_2}\right)^{r}  
    \left[\prod_{s=r+1}^{k} \left(\I - \frac{\eta}{\alpha_2^s} \LLambda \right)^{2 M_s}\right]  
    \sum_{i=0}^{P_r-1} \left(\I - \frac{\eta}{\alpha_2^k} \LLambda \right)^{2\beta_1^r i}\lambda.
\end{align*}
For the lower bound, we follow a similar strategy, by bounding the term $\lambda \lambda^\top \geq 0$:
\begin{align*}
    \r_{M_{1:k}} &\geq \left( \I - \frac{\eta}{\alpha_2^k} \LLambda \right)^2 \r_{M_{1:k}-1} + \frac{\eta^2 \sigma^2}{B \alpha_2^{2k} \beta_2^k} \lambda \\
    &\geq \left( \I - \frac{\eta}{\alpha_2^k} \LLambda \right)^{2 \cdot \beta_1^k} \r_{M_{1:k}-\beta_1^k} + \frac{\eta^2 \sigma^2}{B \alpha_2^{2k} \beta_2^k} \sum_{i=0}^{\beta_1^k - 1} \left( \I - \frac{\eta}{\alpha_2^k} \LLambda \right)^{2i} \lambda \\
    &= \left( \I - \frac{\eta}{\alpha_2^k} \LLambda \right)^{2 \cdot \beta_1^k} \r_{M_{1:k}-\beta_1^k} + \frac{\eta^2 \sigma^2}{B \alpha_2^{2k} \beta_2^k} \left[\I - \left( \I - \frac{\eta}{\alpha_2^k} \LLambda \right)^{2\cdot \beta_1^k} \right]\left[\I - \left( \I - \frac{\eta}{\alpha_2^k} \LLambda \right)^{2} \right]^{-1}\lambda \\
    &\geq \left( \I - \frac{\eta}{\alpha_2^k} \LLambda \right)^{2 \cdot \beta_1^k} \r_{M_{1:k}-\beta_1^k} + \frac{1}{2} \frac{\eta \sigma^2}{B \alpha_2^{k} \beta_2^k} \left[\I - \left( \I - \frac{\eta}{\alpha_2^k} \LLambda \right)^{2\cdot \beta_1^k} \right] \1  && \text{Lemma}~\ref{lem:inv_lambda}  \\
    &\geq \left( \I - \frac{\eta}{\alpha_2^k} \LLambda \right)^{2 \cdot \beta_1^k} \r_{M_{1:k}-\beta_1^k} + \frac{1}{4} \frac{\eta^2 \sigma^2}{B} \left(\frac{\beta_1}{\alpha_2^{2} \beta_2} \right)^k\lambda \\
    &\geq \left( \I - \frac{\eta}{\alpha_2^k} \LLambda \right)^{2 \cdot M_k} \r_{M_{1:k-1}} + \frac{1}{4} \frac{\eta^2 \sigma^2}{B} \left(\frac{\beta_1}{\alpha_2^{2} \beta_2} \right)^k \sum_{i=0}^{P_k - 1} \left( \I - \frac{\eta}{\alpha_2^k} \LLambda \right)^{2 \beta_1^k i} \lambda \\
    &\geq \left[ \prod_{s=1}^k \left( \I - \frac{\eta}{\alpha_2^s} \LLambda \right)^{2 \cdot M_s} \right] \r_0 
    \\&+ \frac{1}{4} \frac{\eta^2 \sigma^2}{B }  \sum_{r=1}^{k} \left(\frac{1}{\alpha_1 \alpha_2}\right)^{r}  \left[\prod_{s=r+1}^{k} \left(\I - \frac{\eta}{\alpha_2^s} \LLambda \right)^{2 \cdot M_s}\right]  \sum_{i=0}^{P_r-1} \left(\I - \frac{\eta}{\alpha_2^k} \LLambda \right)^{2\beta_1^r i}\lambda 
\end{align*}

Note that the bias terms are equal $\br_{M_{1:k}} = \bm_{N_{1:k}}$, and the variance terms are $\vm_{N_{1:k}} \geq 4(1+2c) \overline{\r}_{M_{1:k}}$. Dotting the terms into $\lambda$ gives us the upper bound from Theorem~\ref{thm:sgd_bounded_risks}.

\paragraph{Lower bound.} We now turn our attention towards proving the lower bound in Theorem~\ref{thm:sgd_bounded_risks}. Note that the bias terms have an exponentially decaying dominating term. In order to obtain an inequality in the reverse direction for these terms, we compare $\m(\eta)$ with $\r(1.01 \eta)$. We begin with lower bounding $\m$: 

\begin{align*}
    \m_{N_{1:k}}(\eta) &\geq \left( \I - \frac{\eta}{\alpha_1^k} \LLambda \right)^2 \m_{N_{1:k}-1} + \frac{\eta^2 \sigma^2}{B \alpha_1^{2k} \beta_1^k} \lambda \\
    &\geq \left( \I - \frac{\eta}{\alpha_1^k} \LLambda \right)^{2 \beta_2^k} \m_{N_{1:k}-\beta_2^k} + \frac{1}{4} \frac{\eta^2 \sigma^2}{B} \left(\frac{1}{\alpha_1 \alpha_2}\right)^k \lambda \\
    &\geq \left( \I - \frac{\eta}{\alpha_1^k} \LLambda \right)^{2 N_k} \m_{N_{1:k-1}} + \frac{1}{4} \frac{\eta^2 \sigma^2}{B} \left(\frac{1}{\alpha_1 \alpha_2}\right)^k \sum_{i=0}^{P_k - 1}\left( \I - \frac{\eta}{\alpha_1^k} \LLambda \right)^{2 \beta_2^k i} \lambda \\
    &\geq \left[ \prod_{s=1}^k \left( \I - \frac{\eta} {\alpha_1^s} \LLambda \right)^{2 N_s} \right] \m_0 
    \\&+ \frac{1}{4} \frac{\eta^2 \sigma^2}{B} \sum_{r=1}^k \left(\frac{1}{\alpha_1 \alpha_2}\right)^r \left[ \prod_{s=r+1}^k \left(\I - \frac{\eta}{\alpha_1^s} \LLambda \right)^{2 N_s} \right] \sum_{i=0}^{P_r-1} \left( \I - \frac{\eta}{\alpha_1^k} \LLambda \right)^{2 \beta_2^r i} \lambda
\end{align*}

Now we need to establish an upper bound for $\r(1.01 \eta)$. We follow a similar analysis as we did for the upper bound subsection:

\begin{align*}
    &\ \r_{M_{1:k}}(1.01 \eta) \\
    &\leq \left(\I - \frac{1.01 \eta}{\alpha_2^k} \LLambda \right)^2 \r_{M_{1:k}-1} + 1.01^2 \cdot (1+2c) \frac{ \eta^2 \sigma^2}{B \alpha_1^{2k} \beta_1^k} \lambda \\
    &\leq \left(\I - \frac{1.01 \eta}{\alpha_2^k} \LLambda \right)^{2 \beta_1^k} \r_{M_{1:k}-\beta_1^k} + 2 \cdot 1.01^2 \cdot (1+2c) \frac{ \eta^2 \sigma^2}{B} \left(\frac{1}{\alpha_1 \alpha_2}\right)^k \lambda \\
    &\leq  \left(\I - \frac{\eta}{\alpha_1^k} \LLambda \right)^{2 \beta_2^k} \r_{M_{1:k}-\beta_1^k} + 2 \cdot 1.01^2 \cdot (1+2c) \frac{ \eta^2 \sigma^2}{B} \left(\frac{1}{\alpha_1 \alpha_2}\right)^k \lambda && \text{Lemma}~\ref{lem:contractions} \\
    &\leq \left(\I - \frac{\eta}{\alpha_1^k} \LLambda \right)^{2 N_k} \r_{M_{1:k-1}} + 2 \cdot 1.01^2 \cdot (1+2c) \frac{ \eta^2 \sigma^2}{B} \left(\frac{1}{\alpha_1 \alpha_2}\right)^k \sum_{i=0}^{P_k-1}  \left(\I - \frac{\eta}{\alpha_1^k} \LLambda \right)^{2 \beta_2^k i}\lambda \\
    &\leq \left[ \prod_{s=1}^k \left( \I - \frac{\eta} {\alpha_1^s} \LLambda \right)^{2 N_s} \right] \r_0 
    \\&+2 \cdot 1.01^2 \cdot (1+2c)  \frac{\eta^2 \sigma^2}{B} \sum_{r=1}^k \left(\frac{1}{\alpha_1 \alpha_2}\right)^r \left[ \prod_{s=r+1}^k \left(\I - \frac{\eta}{\alpha_1^s} \LLambda \right)^{2 N_s} \right] \sum_{i=0}^{P_r-1} \left( \I - \frac{\eta}{\alpha_1^k} \LLambda \right)^{2 \beta_2^r i} \lambda
\end{align*}

Comparing the bias and variance terms gives us the conclusion.

\end{proof}

\clearpage

\section{Normalized SGD analysis}
\label{app:normalized_sgd_analysis}
Under the setup introduced in Section~\ref{sec:theoretical_analysis}, we have the update rule for normalized SGD is:

\begin{align*}
    \w_{t+1} = \w_t - \eta \frac{1}{\sqrt{\E \|\g_t\|^2}} \g_t
\end{align*}

where $\g_t = \frac{1}{B} \sum_{i=1}^B \g_t^{(i)}$ for $i$ indexing the sample and batch size $B$. 

For MSE and $y = (\w^\star)^\top \x + \eps$, the loss is:
\begin{align*}
    \L(\w_t) &= \frac{1}{2B} \sum_{i=1}^B (\w_t^\top \x^{(i)} - y^{(i)})^2 \\
    &= \frac{1}{2B} \sum_{i=1}^B ((\w_t - \w^\star)^\top \x^{(i)} - \eps)^2 \\
\end{align*}

If we look at the risk at time $t$ we have:

\begin{align}
    \label{eq:risk}
    \risk(\w_t) &= \frac{1}{2B}\sum_{i=1}^B \E [(\w_t - \w^\star)^\top \x^{(i)} \x^{(i), \top} (\w_t - \w^\star) + \eps^2] \\
    &= \frac{1}{2B}\sum_{i=1}^B \E [(\w_t - \w^\star)^\top \x^{(i)} \x^{(i), \top} (\w_t - \w^\star)] + \frac{\sigma^2}{2} \\
    &= \frac{1}{2} \E [(\w_t - \w^\star)^\top \x \x^{\top} (\w_t - \w^\star)] + \frac{\sigma^2}{2} \\
    &= \frac{1}{2}\tr(\H\SSigma_t) + \frac{\sigma^2}{2}
\end{align}

So the risk is equal to:

\begin{align*}
    \risk(\w_t) = \frac{1}{2}\tr(\H\SSigma_t) + \frac{\sigma^2}{2} 
    \implies 
    \risk(\w_t) - \risk(\w^\star) = \frac{1}{2}\tr(\H\SSigma_t)
\end{align*}

\paragraph{Analyzing the gradients} Taking the gradient for 1 sample:
\begin{align*}
    \g_t^{(i)} := \grad_{\w_t} \L^{(i)}
    = (\w_t^\top \x^{(i)} - y^{(i)}) \x^{(i)} 
    = \x^{(i)} (\x^{(i)})^\top (\w_t - \w^\star) - \eps^{(i)} \x^{(i)} .
\end{align*}

So we have:
\begin{align*}
    \g_t
    = \frac{1}{B} \sum_{i=1}^B \x^{(i)} (\x^{(i)})^\top (\w_t - \w^\star) 
    - \frac{1}{B} \sum_{i=1}^B \eps^{(i)} \x^{(i)} .
\end{align*}

Moving forwards, we need to calculate the term in the denominator. Skipping the time index when convenient in order to simplify the notation, and writing
\(
\ddelta_t := \w_t - \w^\star
\),
we compute \(\E\|\g\|^2\) by conditioning on \(\w_t\) and then applying the tower rule:
\begin{align*}
    \E \|\g\|^2
    &= \E\Big[ \E\big[\|\g\|^2 \mid \w_t\big]\Big], \\
    \E\big[\|\g\|^2 \mid \w_t\big]
    &= \frac{1}{B^{2}}\sum_{i=1}^B\sum_{j=1}^B
    \E\big[\ddelta_t^\top \x^{(i)}(\x^{(i)})^\top \x^{(j)}(\x^{(j)})^\top \ddelta_t \mid \w_t\big]
    \\ &+\frac{1}{B^2}\sum_{i=1}^B\sum_{j=1}^B
    \E\big[\eps^{(i)}\eps^{(j)} (\x^{(i)})^\top \x^{(j)} \mid \w_t\big].
\end{align*}

\paragraph{First term}
Let \(\H := \E[\x\x^\top]\). For \(i\neq j\), independence gives
\[
\E\!\left[\x^{(i)}(\x^{(i)})^\top \x^{(j)}(\x^{(j)})^\top\right] = \H^2.
\]
For \(i=j\), assume \(\x \sim \mathcal{N}(0,\H)\), so that the Gaussian fourth-moment identity yields
\[
\E[\x\x^\top \x\x^\top] = \tr(\H)\H + 2\H^2.
\]
Therefore,
\begin{align*}
    \E\big[\ddelta_t^\top \x^{(i)}(\x^{(i)})^\top \x^{(j)}(\x^{(j)})^\top \ddelta_t \mid \w_t\big]
    = \ddelta_t^\top\Big(\H^2(1-\delta_{ij}) + \delta_{ij}\big(\tr(\H)\H + 2\H^2\big)\Big)\ddelta_t .
\end{align*}
Summing over \(i,j\) yields
\begin{align*}
    \frac{1}{B^{2}}\sum_{i=1}^B\sum_{j=1}^B
    \E\big[\ddelta_t^\top \x^{(i)}(\x^{(i)})^\top \x^{(j)}(\x^{(j)})^\top \ddelta_t \mid \w_t\big]
    &=
    \ddelta_t^\top\Big(
        \frac{1}{B}\big(\tr(\H)\H + 2\H^2\big)
        + \Big(1-\frac{1}{B}\Big)\H^2
    \Big)\ddelta_t \\
    &=
    \ddelta_t^\top\Big(
        \frac{1}{B}\tr(\H)\H
        + \Big(1+\frac{1}{B}\Big)\H^2
    \Big)\ddelta_t .
\end{align*}

\paragraph{Second term}
Using \(\E[\eps^{(i)}\eps^{(j)}]=\sigma^2\delta_{ij}\) and \(\E[(\x^{(i)})^\top \x^{(j)}]=\tr(\H)\delta_{ij}\), we get
\begin{align*}
    \frac{1}{B^2}\sum_{i=1}^B\sum_{j=1}^B
    \E\big[\eps^{(i)}\eps^{(j)} (\x^{(i)})^\top \x^{(j)} \mid \w_t\big]
    =
    \frac{1}{B^2}\sum_{i=1}^B \sigma^2 \E[(\x^{(i)})^\top \x^{(i)}]
    = \sigma^2 \frac{\tr(\H)}{B}.
\end{align*}

So altogether, conditional on \(\w_t\),
\begin{align*}
    \E\big[\|\g\|^2 \mid \w_t\big]
    =
    \ddelta_t^\top\Big(
        \frac{1}{B}\tr(\H)\H
        + \Big(1+\frac{1}{B}\Big)\H^2
    \Big)\ddelta_t
    \;+\; \sigma^2 \frac{\tr(\H)}{B}.
\end{align*}
Taking the outside expectation and letting \(\SSigma_t := \E[\ddelta_t\ddelta_t^\top]\), we obtain
\begin{align*}
    \E \|\g_t\|^2
    &= \tr\!\Big(\Big(
        \frac{1}{B}\tr(\H)\H
        + \Big(1+\frac{1}{B}\Big)\H^2
    \Big)\SSigma_t\Big) + \sigma^2 \frac{\tr(\H)}{B} \\
    &= \frac{1}{B}\tr(\H)\tr(\H\SSigma_t)
    + \Big(1+\frac{1}{B}\Big)\tr(\H^2 \SSigma_t)
    + \sigma^2 \frac{\tr(\H)}{B}.
\end{align*}

Since \(\SSigma_t \preceq \mathcal{O}(\sigma^2 \I)\) (Lemma 8)~\citep{jain2018parallelizing}, then for large enough \(t\), we have that the gradient norms are dominated by the additive variance, which is captured in Assumption~\ref{assump:additive_noise_dominates}. For the remainder of this paper we will assume \(t\) is large enough for this assumption to hold, and with an abuse of notation we will write \(=\) (as opposed to \(\approx\)):
\[
\E \|\g_t\|^2 = \frac{\sigma^2}{B}\tr(\H).
\]

Under Assumption~\ref{assump:additive_noise_dominates}, we have the following update rule:

\begin{align}
\label{eq:nsgd_var_dom}
    \w_{t+1} = \w_t - \eta \frac{\sqrt{B}}{\sigma \sqrt{\tr(\H)}} \nabla_{\w_t} \L
\end{align}

Note that this is simply SGD with a learning rate $\neta = \eta \frac{\sqrt{B}}{\sigma \sqrt{\tr(\H)}}$. 

\subsection{How Aggressive Can the Scheduler Be?}
In this section we provide a short lemma explaining what is the most aggressive scheduler we could possibly used, based on hard contraints on $\alpha$, $\beta$. 
\begin{lemma}[Divergence conditions.]
\label{lem:divergence_conditions}
    Suppose we are in the same setting as Corollary~\ref{cor:normalized_sgd_bounded_risks}. For a fixed initial learning rate $\eta$, the training dynamics diverge asymptotically if $\alpha < \sqrt{\beta}$ as the training time goes to infinity, for $\alpha$ and $\beta$ constants independent of time.
    \begin{proof}
    \vspace{-.5cm}
        To see this, we focus on the scaling of $\neta_k \eqsim \eta \left(\frac{\sqrt{\beta}}{\alpha} \right)^k$. Note that if $\sqrt{\beta} > \alpha$, then at every cut we are effectively increasing the learning rate. Thus, there must exist $k > 0$ such that $\neta_k > \eta_{\max}$, where $\eta_{\max}$ is the maximum convergent learning rate for SGD~\citep{wu2022power,jain2018parallelizing}, leading to divergence.
    \end{proof}
\end{lemma}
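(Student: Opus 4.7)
}
The plan is to leverage Assumption~\ref{assump:additive_noise_dominates} to reduce normalized SGD to plain SGD with a $k$-dependent effective learning rate, and then invoke known sharp upper bounds on the SGD learning rate for noisy linear regression to certify divergence once that effective step size grows past the stability threshold. I will work phase-by-phase with $\eta_k = \eta/\alpha^k$ and $B_k = B\,\beta^k$, consistent with the schedules of Corollary~\ref{cor:normalized_sgd_bounded_risks}.

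First, I invoke Assumption~\ref{assump:additive_noise_dominates} as was done to derive Equation~\ref{eq:nsgd_var_dom}: in phase $k$ the NSGD update is, up to constants, identical to SGD at learning rate
\[
\neta_k \;\eqsim\; \eta_k\,\frac{\sqrt{B_k}}{\sigma\sqrt{\tr(\H)}}
\;=\; \frac{\eta\sqrt{B}}{\sigma\sqrt{\tr(\H)}}\cdot\Bigl(\tfrac{\sqrt{\beta}}{\alpha}\Bigr)^{k}.
\]
Thus the qualitative behavior of the normalized process is governed entirely by the geometric factor $(\sqrt{\beta}/\alpha)^k$. If $\sqrt{\beta}>\alpha$, then $\neta_k\to\infty$ as $k\to\infty$.

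Second, I compare this to the stability threshold $\eta_{\max}$ for SGD on noisy linear regression with data covariance $\H$. A finite $\eta_{\max}$ of order $1/\tr(\H)$ (up to the condition-number/model-specific constants made explicit in \citet{jain2018parallelizing} and \citet{wu2022power}) is a necessary condition for the variance iterate $\vm_t$ to stay bounded; above $\eta_{\max}$ the variance recursion has a transition matrix with spectral radius greater than one, so $\tr(\H \SSigma_t)$ grows without bound. Since $\neta_k$ is monotone in $k$ under $\sqrt{\beta}>\alpha$, there exists a finite $k^\star$ with $\neta_{k^\star}>\eta_{\max}$; for all $k\ge k^\star$ the variance contribution to the risk grows geometrically, violating Assumption~\ref{assump:non_expansive_risk} and hence driving the risk to infinity as the total training time goes to infinity.

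The main obstacle I anticipate is making the link between ``instability of the variance recursion'' and actual divergence of the risk precise. Rather than re-deriving the tight divergence criterion, I will cite the existing results of \citet{wu2022power,jain2018parallelizing} which give an explicit $\eta_{\max}$ beyond which the variance operator has spectral radius $>1$, and I will use the fact that the additive noise floor $\sigma^2$ continually injects mass into the unstable eigendirections of this operator. A minor additional subtlety is that Assumption~\ref{assump:additive_noise_dominates} itself relies on the risk remaining bounded; this is not problematic because the argument proceeds by contradiction — so long as the assumption holds the reduction to SGD is valid, and as soon as the induced SGD step size exceeds $\eta_{\max}$ the risk blows up and the scheduler is certified divergent, as claimed.
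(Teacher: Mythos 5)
Your proposal follows essentially the same route as the paper's own proof: reduce NSGD to SGD with effective step size $\neta_k \eqsim \eta\,(\sqrt{\beta}/\alpha)^k$ via Assumption~\ref{assump:additive_noise_dominates}, observe that this grows geometrically when $\sqrt{\beta}>\alpha$, and conclude divergence once it exceeds the SGD stability threshold $\eta_{\max}$ of \citet{wu2022power,jain2018parallelizing}. Your additional remarks --- spelling out that the noise floor keeps injecting mass into the unstable eigendirections of the variance operator, and handling the circularity of Assumption~\ref{assump:additive_noise_dominates} by contradiction --- are sensible refinements of the same argument rather than a different approach.
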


\clearpage
\section{Weight Decay}
\label{app:weight_decay}
In this section we provide experiments on 150M models trained with AdamW, sweeping weight decay $\lambda \in \{ 0.000001, 0.00001, 0.0001, 0.001, 0.01, 0.1, 1.0 \}$ and learning rate $\eta \in \{0.001, 0.003, 0.01, 0.03\}$, and the rest of the parameters are as explained in Section~\ref{sec:exp_details}. For every figure we pick the best $(\eta, \lambda)$ pair on cosine annealing, and we use the values for Seesaw. For batch sizes 128 and 256, the optimal pair from the sweep was $(\eta, \lambda) = (0.003, 0.0001)$, while for batch size 512 it was $(\eta, \lambda) = (0.01, 0.001)$. Figure~\ref{fig:weight_decay_losses} shows the results:

\begin{figure}[htp!]
    \centering
    \includegraphics[width=1.0\linewidth]{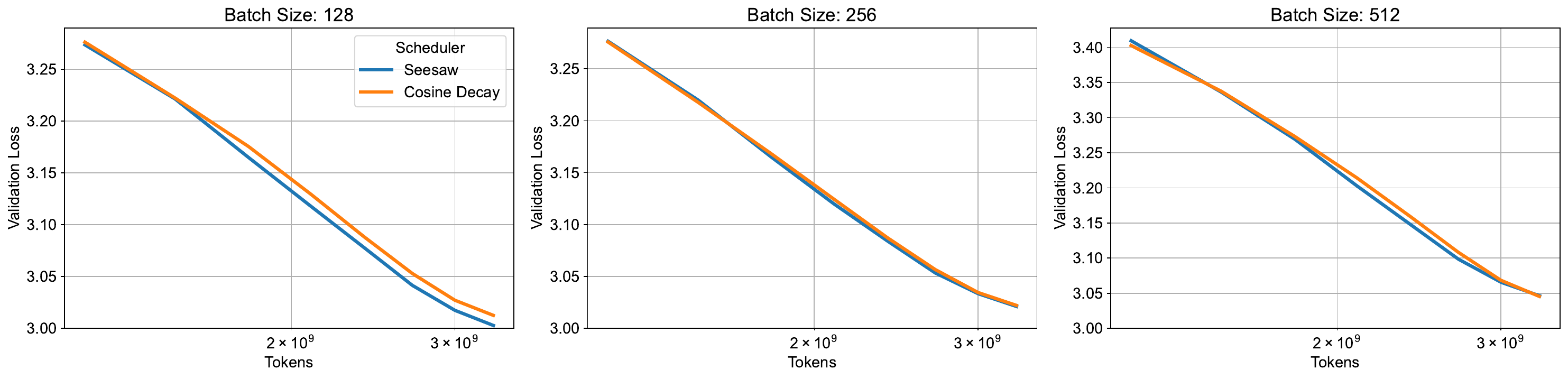}
    \caption{150M experiments with weight decay across different batch sizes (128, 256, 512) for cosine annealing and Seesaw. The learning rate and weight decay values are $(\eta, \lambda) = (0.003, 0.0001)$ for batch sizes 128 and 256, and $(\eta, \lambda) = (0.01, 0.001)$ for batch size 512. Note that the losses overlap during training. We provide the final validation losses in Table~\ref{tab:weight_decay_final_losses}.}
    \label{fig:weight_decay_losses}
\end{figure}

\noindent Table~\ref{tab:weight_decay_final_losses} shows the final validation losses:
\begin{table}[h!]
\centering
\begin{tabular}{c|c|c|c}
\multicolumn{1}{c}{} & B=128 & B=256 & B=512\\ \hline
150M (cosine) & 3.0125 & 3.0221 & 3.0455 \\
150M (Seesaw) & 3.0028 & 3.0211 & 3.0463 \\ \midrule
\end{tabular}
\caption{Final validation losses at the best learning rate and weight decay pair (for the cosine annealing scheduler) for each batch size, for $\alpha=1.1$. Note that the dynamics match robustly.}
\label{tab:weight_decay_final_losses}
\end{table}

\clearpage
\section{Comparison to other schedulers}
We compare our scheme with other schedulers in this section.

\begin{figure}[!htp]
    \centering
    \includegraphics[width=1.0\linewidth]{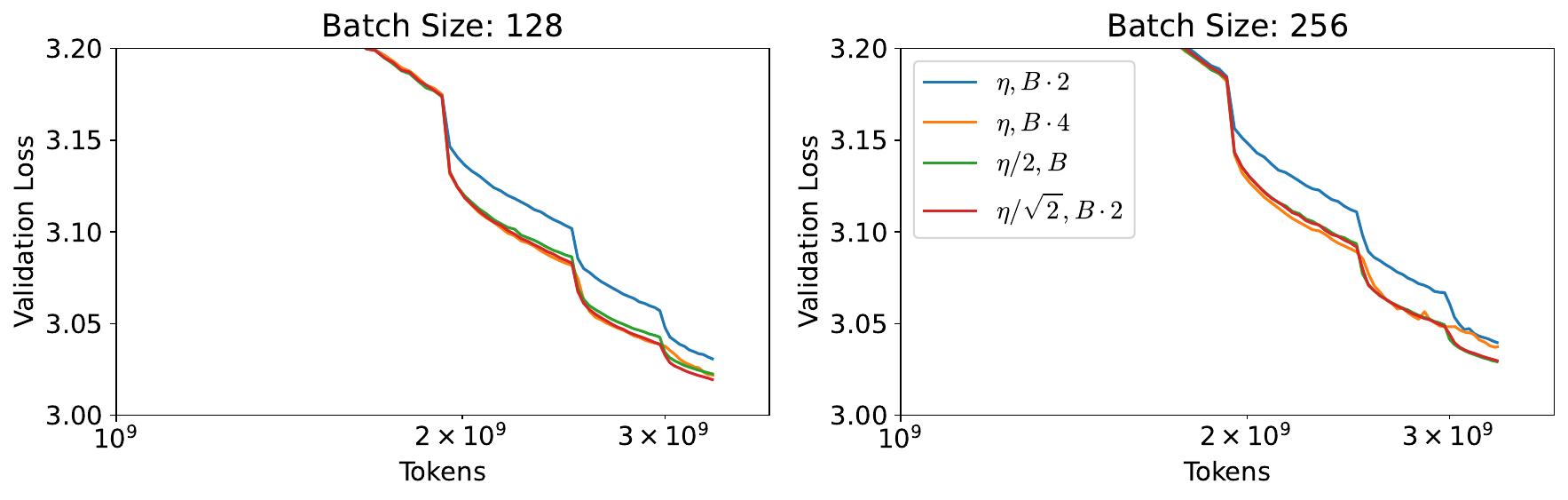}
    \caption{150M models trained with 4 different schedules, at CBS (right) and just below (left). Blue trace keeps learning rate fixed and doubles batch size, orange trace keeps learning rate fixed and quadruples batch size, green trace halves learning rate at fixed batch size, and red trace is Seesaw. Note that the naive scheduling (blue) severely underperforms the baseline (green) and Seesaw (red).}
\label{fig:naive_schedules}
\end{figure}

\clearpage
\section{Auxiliary Losses}
\label{app:aux_losses}
In this section we ablate over the effect of z-loss on the training dynamics~\citep{olmo20242}. We observe no difference in the training stability of our models at $150$M scale in Figure~\ref{fig:zloss_ablation}:
\begin{figure}[!htp]
    \centering
    \includegraphics[width=1.0\linewidth]{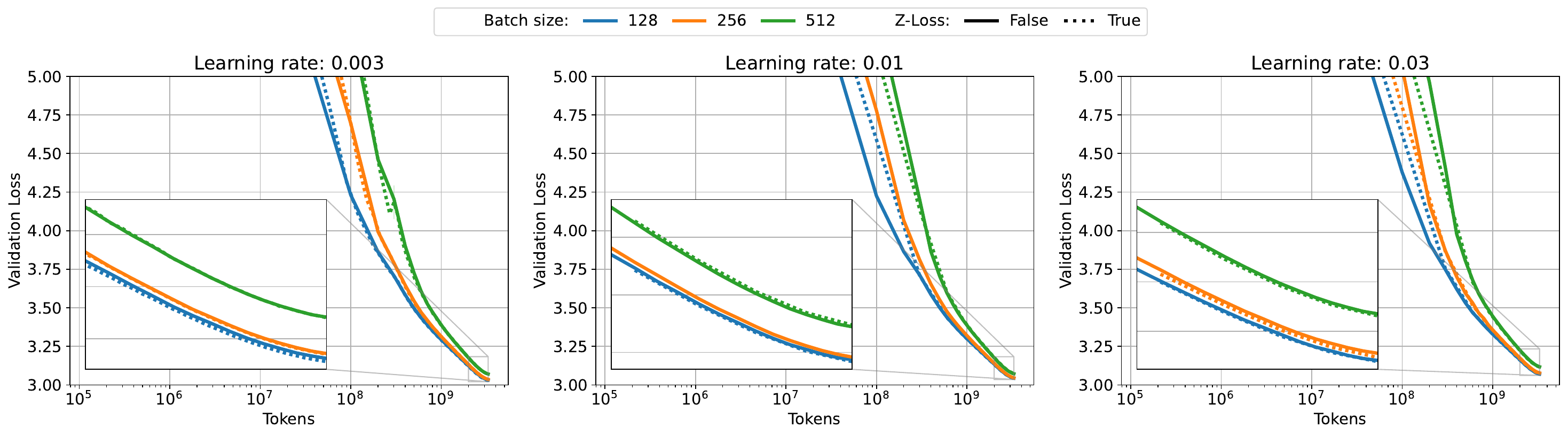}
    \caption{150M models trained with cosine decay in Chinchilla scale, across 3 learning rates and 3 batch sizes. Note that the final validation losses are equal whether Z-Loss is enabled or not.}
    \label{fig:zloss_ablation}
\end{figure}

However, while the final validation loss does not change as an effect of z-loss at our scale, we have observed certain instabilities in the z-loss towards the end of training when using Seesaw in Figure~\ref{fig:zloss_oscillations}. We speculate that the way we are scaling the learning rate and batch size might not be the proper way to do it for z-loss, and we leave this study for future work. 

\begin{figure}[!htp]
    \centering
    \includegraphics[width=.5\linewidth]{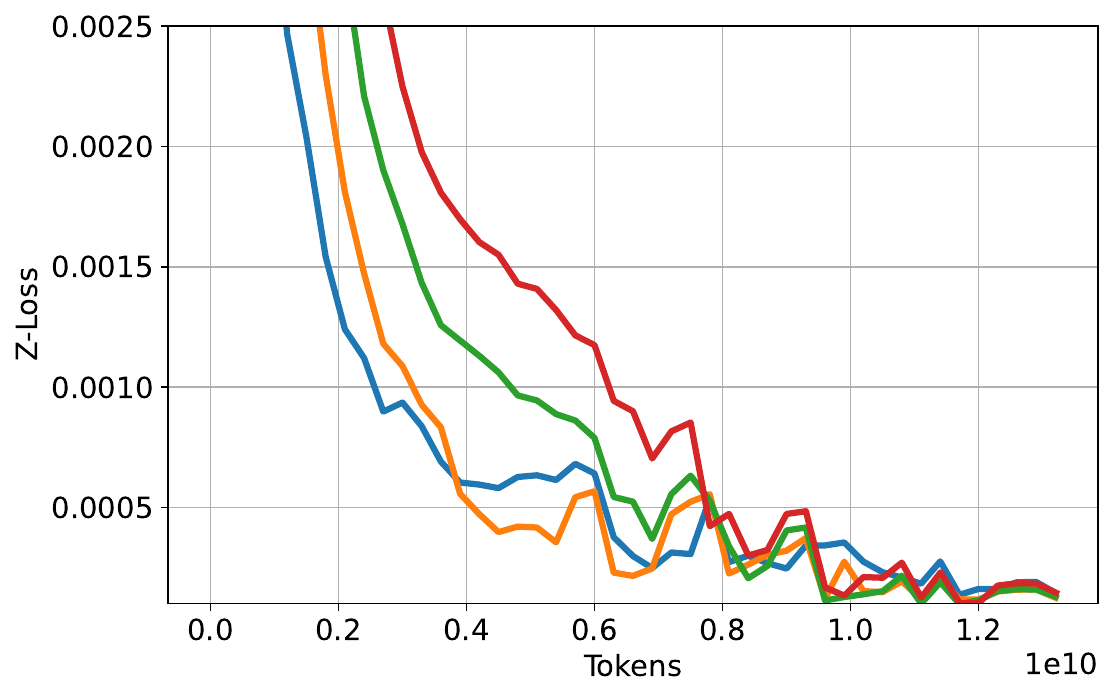}
    \caption{600M models trained with Seesaw decay in Chinchilla scale, with Z-Loss.}
    \label{fig:zloss_oscillations}
\end{figure}

\end{document}